\pdfoutput=1
\documentclass[letterpaper]{article} 
\usepackage{aaai24}  
\usepackage{times}  
\usepackage{helvet}  
\usepackage{courier}  
\usepackage[hyphens]{url}  
\usepackage{graphicx} 
\urlstyle{rm} 
\usepackage{natbib}  
\usepackage{caption} 
\frenchspacing  
\setlength{\pdfpagewidth}{8.5in}  
\setlength{\pdfpageheight}{11in}  

\usepackage{makecell}
\usepackage{multirow}
\usepackage{float}
\usepackage{enumitem}
\usepackage{soul}
\usepackage{colortbl}
\usepackage{tikz}
\usepackage{amsmath}
\usepackage{amssymb}
\usepackage{mathtools}
\usepackage{amsthm}
\usepackage{pifont}
\newcommand{\xmark}{\ding{55}}%
\newtheorem{prop}{Proposition}
\usepackage{amsmath}
\usepackage{cleveref}
\usepackage{url}            
\usepackage{booktabs}       
\usepackage{amsfonts}       
\usepackage{nicefrac}       
\usepackage{microtype}      
\usepackage{xcolor}         
\usepackage[textsize=tiny]{todonotes}
\usepackage{floatflt}
\usepackage{tabularx}
\usepackage{subcaption}

%
\usepackage{algorithm}
\usepackage{algorithmic}

%
\usepackage{newfloat}
\usepackage{listings}
\DeclareCaptionStyle{ruled}{labelfont=normalfont,labelsep=colon,strut=off} 
\lstset{%
	basicstyle={\footnotesize\ttfamily},
	numbers=left,numberstyle=\footnotesize,xleftmargin=2em,
	aboveskip=0pt,belowskip=0pt,%
	showstringspaces=false,tabsize=2,breaklines=true}
\floatstyle{ruled}
\newfloat{listing}{tb}{lst}{}
\floatname{listing}{Listing}
%
\pdfinfo{
/TemplateVersion (2024.1)
}

\newcommand{\ours}{Ours} 
\newcommand{\din}{\mathcal{D}_{\text{in}}}
\newcommand{\dintest}{\mathcal{D}_{\text{in}}^{\text{test}}}

\newcommand{\dout}{\mathcal{D}_{\text{out}}}
\newcommand{\douttest}{\mathcal{D}_{\text{out}}^{\text{test}}}

\newcommand{\Lin}{\mathcal{L}_{\text{in}}}
\newcommand{\Lout}{\mathcal{L}_{\text{out}}}

\newcommand{\x}{\boldsymbol{x}}

\setcounter{secnumdepth}{0} 

%


\title{EAT: Towards Long-Tailed Out-of-Distribution Detection}
\author{
    Tong Wei\thanks{Corresponding author}, Bo-Lin Wang, Min-Ling Zhang
}
\affiliations{
    School of Computer Science and Engineering, Southeast University, Nanjing 210096, China\\


    Key Laboratory of Computer Network and Information Integration (Southeast University), Ministry of Education, China\\
    \{weit, wangbl, zhangml\}@seu.edu.cn
%
}

\usepackage{bibentry}

\begin{document}

\maketitle

\begin{abstract}
Despite recent advancements in out-of-distribution (OOD) detection, most current studies assume a class-balanced in-distribution training dataset, which is rarely the case in real-world scenarios. This paper addresses the challenging task of long-tailed OOD detection, where the in-distribution data follows a long-tailed class distribution. The main difficulty lies in distinguishing OOD data from samples belonging to the tail classes, as the ability of a classifier to detect OOD instances is not strongly correlated with its accuracy on the in-distribution classes. To overcome this issue, we propose two simple ideas: (1) Expanding the in-distribution class space by introducing multiple abstention classes. This approach allows us to build a detector with clear decision boundaries by training on OOD data using \textit{virtual labels}. (2) Augmenting the context-limited tail classes by overlaying images onto the context-rich OOD data. This technique encourages the model to pay more attention to the discriminative features of the tail classes. We provide a clue for separating in-distribution and OOD data by analyzing gradient noise. Through extensive experiments, we demonstrate that our method outperforms the current state-of-the-art on various benchmark datasets. Moreover, our method can be used as an add-on for existing long-tail learning approaches, significantly enhancing their OOD detection performance. Code is available at: \url{https://github.com/Stomach-ache/Long-Tailed-OOD-Detection}.
\end{abstract}

\section{Introduction}
\label{sec:intro}

Deep neural networks (DNNs) can achieve high performance in various real-world applications by training on large-scale and well-annotated datasets. Most supervised learning literature makes a common assumption that the training and test data have the same distribution. However, DNNs in deployment often encounter data from an unknown distribution, and it has been shown that DNNs tend to produce wrong predictions on anonymous, or out-of-distribution (OOD) test data with high confidence \cite{hendrycks17baseline,liang2018enhancing,hein2019relu}, which can result in severe mistakes in practice.

Recently, OOD detection, which aims to reject OOD test data without classifying them as in-distribution labels, has caught great attention. Existing state-of-the-art OOD detectors achieve huge success by maximizing the predictive uncertainty \cite{hendrycks2018deep,meinke2019towards}, energy function \cite{liu2020energy}, and abstention class confidence \cite{mohseni2020self,chen2021atom} for OOD data. However, these approaches assume that the in-distribution data is class-balanced, which is usually violated in real-world tasks \cite{van2017devil,liu2019large,cui2019class,weit2020tnnls,wei2021rolt,wei2023cvpr}. In this paper, we consider that the in-distribution training data follows a long-tailed class distribution. Under this setup, directly combining existing OOD detectors with long-tailed learning methods still leads to unsatisfactory performance \cite{wang2022pascl}. 
So, a natural question is raised:
\begin{center}
{\it Is it possible to effectively distinguish OOD data from tail-class samples?}
\end{center}
To answer this question, we propose a novel framework, {EAT}, which is composed of two key ingredients: (1) \textit{dynamic virtual labels}, which expand the classification space with abstention classes for OOD data and are dynamically assigned by the model in the training process. EAT classifies OOD samples into abstention OOD classes rather than imposing uniform predictive probabilities over inlier classes such as in OE \cite{hendrycks2018deep}, Energy \cite{liu2020energy}, and PASCL \cite{wang2022pascl}. This step is critical because inherent similar OOD samples can be pushed closer if they are classified as an identical OOD class, and the decision boundary between inlier data and OOD data will be clearer. %
(2) \textit{tail class augmentation}, which augments the tail-class images by pasting them onto the context-rich OOD images to force the model to focus on the foreground objects. Precisely, given an original image from the tail class, it is cropped in various sizes and pasted onto images from OOD data. Then, we can create tail-class images with more diverse contexts by changing the background. The generalization for tail classes can be significantly improved.

To further enhance the classification of inlier data, we propose a method that involves fine-tuning the classifiers exclusively using inlier data. This fine-tuning process employs a class-balanced loss function for a few iterations. Additionally, we illustrate that our method can be seamlessly integrated with existing long-tail learning approaches, leading to a significant improvement in their OOD detection performance. This is evident from the results presented in \Cref{tab:plugin}, where our method acts as a valuable plugin to boost the performance of these approaches. These findings contradict the argument put forth by previous work \cite{vaze2022openset} that a classifier performing well on in-distribution data would automatically excel as an OOD detector.

The key \textbf{contributions} of this paper are summarized as follows:
(1) We tackle the challenging and under-explored problem of long-tailed OOD detection. This problem poses unique difficulties and requires innovative solutions. (2) We propose a novel approach to train OOD data using virtual labels, presenting an alternative to the outlier exposure method specifically designed for long-tailed data. Furthermore, we provide insights into the impact of virtual labels by examining gradient noise, deepening our understanding of their effectiveness. (3) Through extensive experiments conducted on various datasets, we empirically validate the effectiveness of our proposed method. Our results demonstrate an average boost of 2.0\% AUROC and 2.9\% inlier classification accuracy compared to the previous state-of-the-art method. (4) Our method serves as a versatile add-on for mainstream long-tailed learning methods, significantly enhancing their performance in detecting OOD samples. Importantly, our findings challenge the notion that a strong inlier classifier necessarily implies good OOD detection performance.

\section{Related Work}
\label{sec:related-works}

\paragraph{OOD detection}
 
As a representative approach, Outlier Exposure (OE) proposes maximizing the OOD data's predictive uncertainty as a complementary objective for the in-distribution classification loss. Further, Energy \cite{liu2020energy} improves OE by introducing the energy function as a regularization term and detects OOD samples according to their energy scores. 
Conversely, SOFL \cite{mohseni2020self} and ATOM \cite{chen2021atom} attempt to classify OOD samples into abstention classes while in-distribution samples are classified into their true classes. Then, OOD data can be identified according to the model's outputs on abstention classes.
Although existing OOD detectors can achieve high performance, they are typically trained on class-balanced in-distribution datasets and cannot be directly applied to long-tailed tasks.

\paragraph{Long-tailed learning}
Existing approaches to long-tailed learning can be roughly categorized into three types by modifying: (1) the inputs to a model by re-balancing the training data \cite{he2009learning,liu2019large,zhou2020bbn}; (2) the outputs of a model, for example by posthoc adjustment of the classifier \cite{kang2019decoupling,menon2020long,tang2020long} and (3) the internals of a model by modifying the loss function \cite{cui2019class,cao2019learning,jamal2020rethinking,Ren2020}. Recently, \cite{DBLP:conf/nips/YangX20} and \cite{DBLP:conf/icml/WeiTXF022} propose using OOD data to improve the performance of long-tailed learning. However, it is noted that these approaches are designed to boost the in-distribution classification performance and cannot be directly employed to detect OOD data.

\paragraph{Long-tailed OOD detection}
Recently, long-tailed OOD detection has received more and more attention, and several approaches have been proposed to tackle this challenging problem.
PASCL \cite{wang2022pascl} optimizes a contrastive objective between tail class samples and OOD data to push each other away in the latent representation space, which can boost the performance of OOD detection. Further, it minimizes the logit adjustment loss to yield a class-balanced performance of inlier classification.
HOD \cite{roy2022does} studies a long-tail OOD detection problem in medical image analysis, which directly trains a binary classifier to discriminate in-distribution data and OOD data. However, HOD assumes that the OOD data is labeled, while we do not make this assumption and only leverage unlabeled OOD data to aid the detection performance.
OLTR \cite{liu2019large} formally studies the OOD detection task in long-tailed learning. It detects OOD inputs in the latent representation space according to the minimum distance between them and the centroids of in-distribution classes. Although OLTR outperforms several OOD detectors such as MSP \cite{hendrycks17baseline}, it is outperformed by the state-of-the-art OOD detection methods, suggesting that there remains room for improvement.

\section{The Proposed Approach}
\label{sec:solutions}
\subsection{Overview}
We follow the popular training objective of existing state-of-the-art OOD detection methods, which train the model using both in-distribution data and unlabeled OOD data. 
Let $ \din $ and $\dout$  denote an in-distribution training set and an unlabeled OOD training set, respectively. Note that $\din $ follows a long-tailed class distribution in our setup. The training loss function of many existing OOD detection methods (e.g., OE, EnergyOE, ATOM, and PASCL) is defined as follows:
\begin{equation}
    \mathcal{L}_{\text{total}} = 
    \Lin + 
    \lambda \cdot \Lout,
\label{eq:overall-loss}
\end{equation}
where $\Lin$ is the inlier classification loss, $\Lout$ is the outlier detection loss, and $\lambda$ is a trade-off hyperparameter.
Typically, we choose to optimize the standard cross entropy loss (denoted by $\ell$) for the inlier classification task:
\begin{align}
\Lin&=\mathbb{E}_{\x\sim\din}[\ell(f(\boldsymbol{x}), y)] \nonumber\\
&= \log[1 +  \sum_{\mathit{y'} \neq \mathit{y} } \mathit{e}^{({\mathit{f}_{\mathit{y'}} {(\x)} - \mathit{f}_{\mathit{y}} {(\x)} )} }]
\label{eq:ce-id}
\end{align}
Here, $f_{y}(\x)$ represents the predicted logit corresponding to label $y$.
For OOD detection, we propose using $k$ abstention classes. The training outlier data is assigned to abstention classes by generating ``virtual'' labels by the model, and virtual labels may change through training iterations. With this, the training objective for outliers is defined as:
\begin{align}
\Lout&=\mathbb{E}_{\boldsymbol{\widetilde{x}}\sim\dout}[\ell(f(\boldsymbol{\widetilde{x}}), \widetilde{y})] \nonumber \\
&=\log[1 +  \sum_{\mathit{y'} \neq \widetilde{y} } \mathit{e}^{({\mathit{f}_{\mathit{y'}} {(\boldsymbol{\widetilde{x}})} - \mathit{f}_{\widetilde{y}} {(\boldsymbol{\widetilde{x}})} )} }] \nonumber\\
&\text{s.t.}\;\;\;  \widetilde{y} = \arg \max_{c\in [C+1,C+k]} f_c(\widetilde{\boldsymbol{x}})
\label{eq:ce-ood}
\end{align}
where $\widetilde{y}$ is the virtual label of outlier sample $\widetilde{\boldsymbol{x}}$. Note that our treatment for training outlier data differs from existing methods, including OE, Energy, and PASCL. They attempt to maximize the predictive uncertainties of outliers. We demonstrate that our approach achieves significantly better results in the experiments by introducing multiple abstention classes. The proposed approach is detailed below.

\subsection{OOD Samples with Dynamic Virtual Labels}
The approach of using abstention OOD classes is motivated by recent works \cite{abdelzad2019detecting,chen2021atom,vernekar2019out} which propose to add a single abstention class for all outlier data. Although this is shown to be effective compared to the outlier exposure method \cite{hendrycks2018deep}, fitting a heterogeneous outlier set to a single class is challenging and problematic.
One natural mitigation strategy here is to assign multiple abstention classes as possible outputs, which essentially turns the $C$-class classification into a $(C+k)$-class classification problem. Here, we denote $C$ as the number of inlier classes and $k$ as the number of abstention classes added for outliers. 
Taking CIFAR100-LT as an example, if we use an additional $k=30$ classes for fitting outliers, the number of neurons in the final fully-connected layer will be $130$. 

Ultimately, we want our model to classify unseen outliers in the test set into those $k$ abstention classes. This can be achieved by encouraging the model to learn a structured decision boundary for the inliers \textit{vs.} outliers. However, the ground-truth labels for training outlier data are not accessible. Thus, we propose generating virtual labels for the outliers so that the model learns to distinguish them from inliers. Towards this end, we take the predictions of the immediate model as the virtual labels at each training iteration, also known as self-labelling. The model is trained to predict virtual labels by minimizing the cross-entropy loss at the next iteration. The generation of virtual labels coincides with the self-training process, which is a popular framework in semi-supervised learning.
At test time, the sum of probabilities for the $k$ abstention classes indicating the OOD score is used. This is because the abstention classes are meaningless and virtual labels do not correspond to their ground-truth labels.

\paragraph{Mathmatical Interpretation} Exploring the reason behind OOD samples yielding higher scores than in-distribution samples is an intriguing endeavor. One way to comprehend the impact of virtual labels is through the lens of noise in loss gradients \cite{wei2021odnl}. We define the trainable parameter of model $f$ as $\boldsymbol{\theta} \in \mathbb{R}^{p}$. By calculating the gradient of the loss function with respect to $\boldsymbol{\theta}$ and updating the parameter accordingly, we gain insight into this phenomenon. Specifically, we represent the output probabilities for an in-distribution sample $\boldsymbol{x}$ and an OOD sample $\boldsymbol{\widetilde{x}}$ as $\boldsymbol{z}=\text{Softmax}(f(\boldsymbol{x}))$ and $\boldsymbol{\widetilde{z}}=\text{Softmax}(f(\boldsymbol{\widetilde{x}}))$ respectively.
%
\begin{prop}
\label{prop:gradient}
For the cross-entropy loss, Eq.~\eqref{eq:ce-ood} induces gradient noise $\boldsymbol{g} = - \frac{\nabla_{\boldsymbol{\theta}}\boldsymbol{\widetilde{z}}_j}{\boldsymbol{\widetilde{z}}_j}$ on $\nabla_{\boldsymbol{\theta}}\ell(\boldsymbol{z},y)$, s.t., $\boldsymbol{g} \in \mathbb{R}^{p}, j = \arg \max_{j\in [C+1,C+k]} \boldsymbol{\widetilde{z}}$. While each OOD sample in OE \cite{hendrycks2018deep} induces gradient noise $\boldsymbol{g}^\prime = -\frac{1}{C}\sum_{j=1}^{C}\frac{\nabla_{\boldsymbol{\theta}}\boldsymbol{\widetilde{z}}_j}{\boldsymbol{\widetilde{z}}_j}$ on $\nabla_{\boldsymbol{\theta}}\ell(\boldsymbol{z},y)$, where $\frac{\cdot}{\boldsymbol{\widetilde{z}}_j}$ denotes the element-wise division.
\end{prop}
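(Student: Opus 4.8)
The plan is to observe that Proposition~\ref{prop:gradient} is at its core a direct differentiation of the two outlier objectives: when the total loss $\mathcal{L}_{\text{total}}=\Lin+\lambda\Lout$ is minimized by gradient descent, the update direction is $\nabla_{\boldsymbol{\theta}}\ell(\boldsymbol{z},y)+\lambda\nabla_{\boldsymbol{\theta}}\Lout$, so the outlier term acts as an additive perturbation---the ``gradient noise''---on top of the inlier signal $\nabla_{\boldsymbol{\theta}}\ell(\boldsymbol{z},y)$. The whole statement therefore reduces to evaluating the per-sample gradient $\nabla_{\boldsymbol{\theta}}\Lout$ in closed form for our virtual-label loss and for the OE loss, and reading off the two expressions.

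First I would rewrite the per-sample loss in Eq.~\eqref{eq:ce-ood} as a negative log-probability. Since $\boldsymbol{\widetilde{z}}=\text{Softmax}(f(\boldsymbol{\widetilde{x}}))$, the log-sum-exp identity gives
\[
\log\!\Big[1+\sum_{y'\neq\widetilde{y}}e^{f_{y'}(\boldsymbol{\widetilde{x}})-f_{\widetilde{y}}(\boldsymbol{\widetilde{x}})}\Big]=-\log\boldsymbol{\widetilde{z}}_{\widetilde{y}}.
\]
Treating the virtual label $\widetilde{y}$ as a fixed constant at the current iteration (the self-labelling step applies a stop-gradient to the $\arg\max$), the chain rule yields $\nabla_{\boldsymbol{\theta}}\Lout=-\nabla_{\boldsymbol{\theta}}\boldsymbol{\widetilde{z}}_{\widetilde{y}}/\boldsymbol{\widetilde{z}}_{\widetilde{y}}$. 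Substituting $\widetilde{y}=j=\arg\max_{j\in[C+1,C+k]}\boldsymbol{\widetilde{z}}$ recovers exactly $\boldsymbol{g}=-\nabla_{\boldsymbol{\theta}}\boldsymbol{\widetilde{z}}_j/\boldsymbol{\widetilde{z}}_j$.

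Next I would repeat the computation for OE, whose outlier objective is the cross-entropy between the uniform distribution over the $C$ inlier classes and the model output, i.e.\ $-\tfrac{1}{C}\sum_{j=1}^{C}\log\boldsymbol{\widetilde{z}}_j$ up to an additive constant (the divergence $\mathrm{KL}(\mathcal{U}\,\|\,\boldsymbol{\widetilde{z}})$ differs from it only by $-\log C$, which is gradient-free). Differentiating term by term gives $\boldsymbol{g}'=-\tfrac{1}{C}\sum_{j=1}^{C}\nabla_{\boldsymbol{\theta}}\boldsymbol{\widetilde{z}}_j/\boldsymbol{\widetilde{z}}_j$, as claimed. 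A small bookkeeping point to state explicitly is that $\boldsymbol{\widetilde{z}}$ denotes each method's own softmax output---normalized over $C+k$ logits for our loss but over $C$ logits for OE---so the two gradients live over different label supports.

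The main obstacle is not the differentiation, which is routine, but justifying the noise interpretation and the treatment of the $\arg\max$. I would argue that because $\widetilde{y}$ is recomputed from the detached current prediction and held fixed within each backward pass, $\Lout$ is locally smooth in $\boldsymbol{\theta}$ and the gradient above is well-defined; the contrast between the single peaked term $\boldsymbol{g}$ and the averaged term $\boldsymbol{g}'$ is then precisely what the subsequent discussion exploits to explain why concentrating the outlier gradient on one abstention class sharpens the inlier \emph{vs.}\ outlier decision boundary.
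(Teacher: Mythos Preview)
Your proposal is correct and follows essentially the same route as the paper: you identify the outlier term as the additive ``gradient noise,'' rewrite the virtual-label loss as $-\log\boldsymbol{\widetilde{z}}_{\widetilde{y}}$, differentiate via the chain rule with $\widetilde{y}$ held fixed to obtain $\boldsymbol{g}$, and then differentiate the OE objective $-\tfrac{1}{C}\sum_{j=1}^{C}\log\boldsymbol{\widetilde{z}}_j$ to obtain $\boldsymbol{g}'$. If anything, your treatment is slightly more careful than the paper's own proof, since you make explicit the log-sum-exp identity, the stop-gradient on the $\arg\max$, and the irrelevant additive constant in the KL form of OE.
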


\noindent
\textit{Remark.} The detailed proof for the following proposition can be found in the supplementary material. We first draw the conclusion that our proposed virtual labeling induces gradient noise of $\boldsymbol{g} = - \frac{\nabla_{\boldsymbol{\theta}}\boldsymbol{\widetilde{z}}_j}{\boldsymbol{\widetilde{z}}j}$ where $j \in [C+1, C+k]$ is the virtual label for an OOD sample. On the contrary, previous method OE induces gradient noise of $\boldsymbol{g}^\prime=-\frac{1}{C}\sum_{i=1}^{C}\frac{\nabla_{\boldsymbol{\theta}}\boldsymbol{\widetilde{z}}_i}{\boldsymbol{\widetilde{z}}i}$. Therefore, the main advantage of our approach yields gradient noise with dynamic direction depending on the virtual label of each OOD sample, which helps escape local minima during optimization. However, OE induces constant gradient noise so that the optimization of the model always follows the direction of gradient descent. Furthermore, the gradient noise induced by our approach helps the model to produce more conservative in-distribution class (i.e., $[1, C]$) predictions on OOD samples than OE. This is because of the nature of virtual labels which encourages the model to produce confident predictions on virtual classes, i.e., $[C+1, C+k]$. 

\paragraph{Context-rich Tail Class Augmentation}
In our pursuit to enhance generalization, we go beyond the utilization of virtual labels to amplify the distinction between in-distribution and OOD samples. We additionally harness OOD samples to augment the tail classes, leading to improved performance. Our approach involves the implementation of an image-mixing data augmentation technique called CutMix \cite{DBLP:conf/iccv/YunHCOYC19}, which enables us to generate training samples specifically tailored for the tail class. The core concept revolves around leveraging the context-rich nature of the head class and outlier images as backgrounds to create diverse and enriched tail samples.
Given a tail-class image $\boldsymbol{x}_f$, we combine it with a randomly selected head-class or OOD image represented as $\boldsymbol{x}_b$. This merging operation is referred to as the CutMix operator and is defined as follows:
\begin{equation}
\x^{\text{b}\odot\text{f}}=\mathbf M \odot \x^{\text{b}}+(\mathbf 1 - \mathbf M)  \odot  \x^{\text{f}}
\label{eq:cutmix}
\end{equation}
In this context, we designate $\boldsymbol{x}^{\text{b}}$ as the background image and $\boldsymbol{x}^{\text{f}}$ as the foreground image. A binary mask $\mathbf{M} \in {0, 1}^{W \times H}$ is employed to indicate the areas to preserve as background. Correspondingly, $(\mathbf{1} - \mathbf{M})$ selects the patch from the foreground image to be pasted onto the background image. Here, $\mathbf{1}$ represents a matrix filled with ones, and $\odot$ denotes element-wise multiplication. In order to address the limited availability of data for tail classes, we assume that the composite image $\boldsymbol{x}^{\text{b}} \odot \boldsymbol{x}^{\text{f}}$ carries the same label as the foreground image $\boldsymbol{x}^{\text{f}}$, i.e., $y^{\text{b} \odot \text{f}} = y^{\text{f}}$. However, it is important to note that this approach can introduce label noise during training. Therefore, we assign lower sample weights to the generated tail-class images to mitigate the adverse impact.

Tailored CutMix offers two notable advantages for both outlier detection and inlier classification. Firstly, by using diverse OOD images as backgrounds, the model is encouraged to differentiate between tail-class images and OOD data based on foreground objects rather than image backgrounds. This aids in enhancing the model's ability to identify and distinguish outliers effectively. Secondly, the inclusion of head-class and OOD data through mixing increases the frequency of tail classes, leading to a more balanced training set. This improved class balance contributes to enhanced generalization capabilities.

It is worth noting that the study conducted by \cite{park2021cmo} also incorporates CutMix to generate tail-class samples. However, their approach differs in that they sample image pairs from the original long-tailed data distribution and a tail-class-weighted distribution. Furthermore, their study focuses on improving inlier prediction accuracy rather than OOD detection. As far as our knowledge extends, we are the first to adapt CutMix specifically for long-tailed OOD detection, distinguishing our work in this area.

\subsection{Improving OOD Separation}
To amplify both outlier detection and inlier classification performance, we employ a mixture of experts by integrating multiple classifiers that share a common feature extractor. By training an ensemble of $m$ members with random initializations, we optimize the sum of loss functions for these classifiers, aiming to achieve superior results.
\begin{equation}
    \mathcal{L}_{\text{total}} = 
    \sum_{i=1}^m(\Lin^{(i)} + \lambda \cdot \Lout^{(i)}) 
\label{eq:all-loss}
\end{equation}
Given an input $\x$ at test time, we use the average predictions of ensemble members as the OOD score:
\begin{equation}
    G(\x) = \frac{1}{m} \sum_{i=1}^m \sum_{j=C+1}^{C+k} z_j^{(i)},
\end{equation}
where $  \boldsymbol{z} ^{(i)}= \text{Softmax} (f^{(i)}(\x))$. We choose $ m=3$ in our experiments. If $ \x$ is not deemed as an OOD input, the prediction will be $ \arg\max_{1\leq c \leq C} \frac{1}{m} \sum_{i=1}^m \boldsymbol{z}^{(i)}$.

\begin{figure}
    \centering
\includegraphics[width=0.95\linewidth]{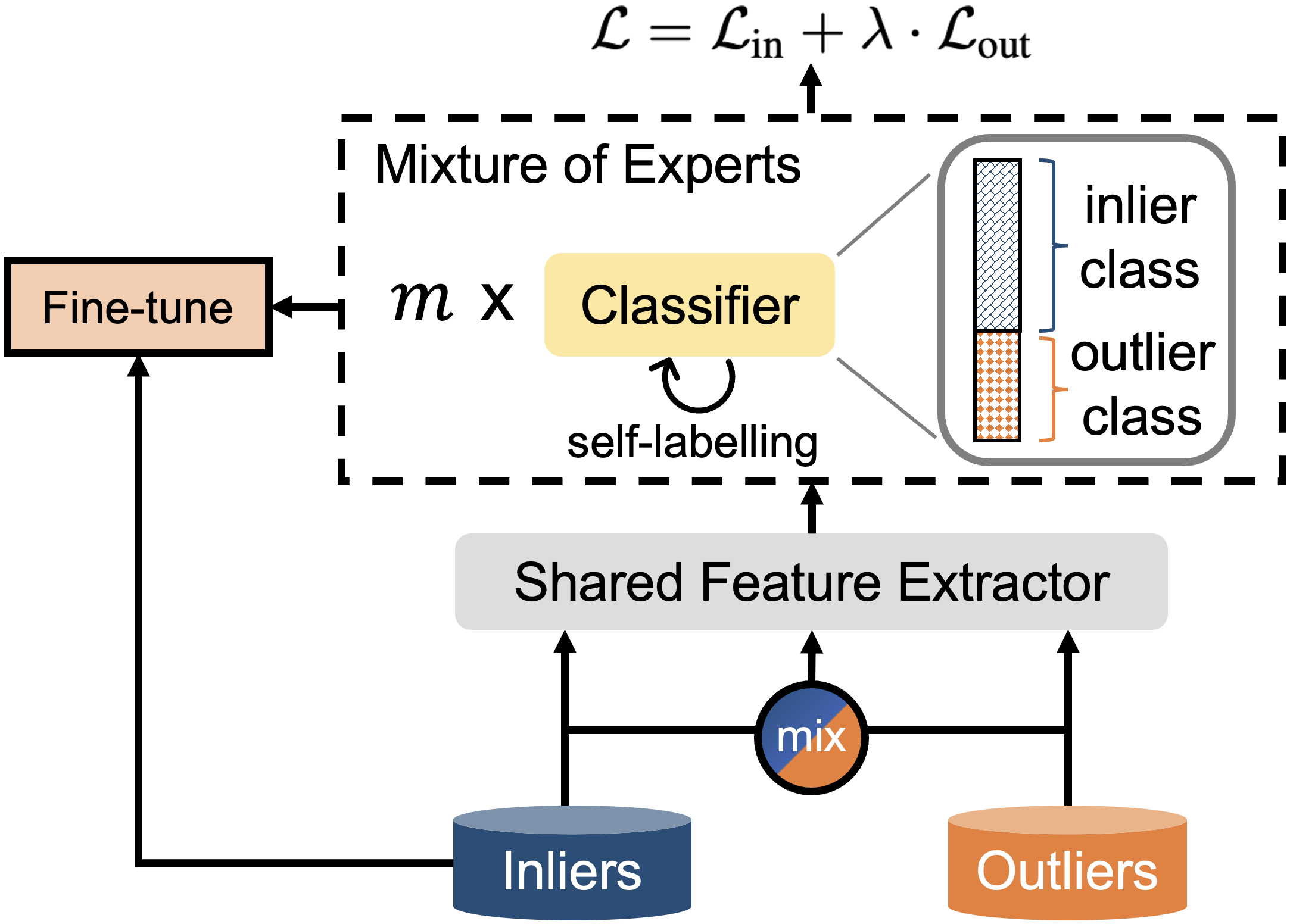}
\caption{Overview of EAT framework.}\label{fig:EAT-framework}
\end{figure}

It is important to note that the performance improvement achieved by deep ensembles relies on the diversity introduced through random initialization of network parameters. In our specific setup, since we employ a shared feature extractor, random initialization is applied solely to the parameters of the last layer. To enhance diversity further, we train ensemble models with virtual labels generated by each classifier. This means that the virtual label for a given sample $\boldsymbol{x}$ is obtained by selecting $\tilde{y} = \arg \max_{c \in [C+1,C+k]} f^{(i)}_c(\boldsymbol{x})$ for the $i$-th classifier. The overall framework of our approach is depicted in Figure \ref{fig:EAT-framework}.
%
%
%


\subsection{Model Fine-tuning }

After training a multi-branch model, we proceed to fine-tune the classifiers using training inlier data, aiming to enhance the classification performance. It is worth mentioning that the two-stage approach, involving representation learning followed by classifier learning, is commonly employed in the field of long-tailed learning. Prominent examples include decoupling \cite{kang2019decoupling}, BBN \cite{zhou2020bbn}, and MisLAS \cite{DBLP:conf/cvpr/ZhongC0J21}. In this article, we implicitly explore the advantages of utilizing OOD data for representation learning by optimizing supervised objectives during model training. Furthermore, we keep the feature extractor fixed and refine the classifiers for a few iterations to improve inlier classification performance. During the fine-tuning process, we employ the logits adjustment (LA) loss \cite{menon2020long} to guide the training.
\begin{equation}
\ell_{\text{LA}}(y,\mathit{f(\x)}) 
= \log[1 + \begin{matrix} \sum_{\mathit{y'} \neq \mathit{y} } \mathit{e}^{{\bigtriangleup}_{\mathit{y}\mathit{y'}}} \cdot \mathit{e}^{({\mathit{f}_{\mathit{y'}} {(\x)} - \mathit{f}_{\mathit{y}} {(\x)} )} }\end{matrix}]
\label{eq:stage2-ce-loss}
\end{equation}
Here, the pairwise label margins ${{\bigtriangleup}_{\mathit{y}\mathit{y'}}} = \log{\frac{\pi_{\mathit{y}}}{\pi_{\mathit{y'}}}}$ represents the desired gap between predictive confidence for $\mathit{y}$ and $\mathit{y'}$ depending on the number of each class. $\pi_y$ denotes the class prior of class $y$ in the training inlier data. We will empirically show that fine-tuning can introduce not only large improvements for the inlier classification task but also boost the performance of OOD detection.

\begin{table}[!h]
\centering
\footnotesize

\setlength{\tabcolsep}{0.5mm}
\begin{subtable}[h]{1.0\linewidth}

\begin{tabularx}{\linewidth}{ cccccc }
\toprule
$\douttest$ & \textbf{Method} & \textbf{AUROC} & \textbf{AUPR} & \textbf{FPR95} & \textbf{ACC95} \\
\midrule
\multirow{3}{*}{Texture} 
& OE & 92.59$_{\pm\text{0.42}}$ & 83.32$_{\pm \text{1.67}}$ &	25.10$_{\pm \text{1.08}}$ & 84.52$_{\pm \text{0.76}}$ \\
& PASCL & 93.16$_{\pm \text{0.37}}$ & {84.80}$_{\pm \text{1.50}}$ &{23.26}$_{\pm \text{0.91}}$ & {85.86}$_{\pm \text{0.72}}$\\
& \ours & \textbf{95.44}$_{\pm \text{0.46}}$ & \textbf{92.28}$_{\pm \text{0.95}}$ & \textbf{21.50}$_{\pm \text{1.50}}$ & \textbf{87.00}$_{\pm \text{0.66}}$\\
\midrule
\multirow{3}{*}{SVHN} 
& OE & 95.10$_{\pm \text{1.01}}$ & 97.14$_{\pm \text{0.81}}$ & 16.15$_{\pm \text{1.52}}$ & 81.33$_{\pm \text{0.81}}$ \\
& PASCL & {96.63}$_{\pm \text{0.90}}$ & {98.06}$_{\pm \text{0.56}}$ & {12.18}$_{\pm \text{3.33}}$ & {82.72}$_{\pm \text{1.51}}$\\
& \ours & \textbf{97.92}$_{\pm \text{0.36}}$ & \textbf{99.06}$_{\pm \text{0.20}}$ & \textbf{9.87}$_{\pm \text{2.06}}$ & \textbf{84.39}$_{\pm \text{0.51}}$\\
\midrule
\multirow{3}{*}{CIFAR100} 
& OE & 83.40$_{\pm	\text{0.30}}$ & 80.93$_{\pm \text{0.57}}$ & 56.96$_{\pm 0.91}$ & {94.56}$_{\pm \text{0.57}}$ \\
& PASCL & {84.43}$_{\pm \text{0.23}}$ & {82.99}$_{\pm \text{0.48}}$ & 57.27$_{\pm \text{0.88}}$ & 94.48$_{\pm \text{0.31}}$ \\
& \ours & \textbf{85.93}$_{\pm \text{0.15}}$ & \textbf{86.10}$_{\pm \text{0.35}}$ & \textbf{54.13}$_{\pm \text{0.63}}$ & \textbf{95.81}$_{\pm \text{0.41}}$ \\
\midrule
\multirow{3}{*}{\makecell{Tiny\\ImageNet}} 
& OE & 86.14$_{\pm \text{0.29}}$ & 79.33$_{\pm \text{0.65}}$ & 47.78$_{\pm \text{0.72}}$ & 91.19$_{\pm \text{0.33}}$ \\
& PASCL & {87.14}$_{\pm \text{0.18}}$ & {81.54}$_{\pm \text{0.38}}$ & {47.69}$_{\pm \text{0.59}}$ & {91.20}$_{\pm \text{0.35}}$ \\
& \ours & \textbf{89.11}$_{\pm \text{0.34}}$ & \textbf{85.43}$_{\pm \text{0.58}}$ & \textbf{41.75}$_{\pm \text{0.68}}$ & \textbf{91.67}$_{\pm \text{0.65}}$ \\
\midrule
\multirow{3}{*}{LSUN} 
& OE & 91.35$_{\pm	\text{0.23}}$ & 87.62$_{\pm \text{0.82}}$ & 27.86$_{\pm \text{0.68}}$ & 85.49$_{\pm \text{0.69}}$ \\
& PASCL & {93.17}$_{\pm \text{0.15}}$ & {91.76}$_{\pm \text{0.53}}$ & {26.40}$_{\pm \text{1.00}}$ & {86.67}$_{\pm \text{0.90}}$ \\
& \ours & \textbf{95.13}$_{\pm \text{0.43}}$ & \textbf{94.12}$_{\pm \text{0.61}}$ & \textbf{19.72}$_{\pm \text{1.61}}$ & \textbf{86.68}$_{\pm \text{0.64}}$ \\
\midrule
\multirow{3}{*}{Places365} 
& OE & 90.07$_{\pm \text{0.26}}$ & 95.15$_{\pm \text{0.24}}$ & 34.04$_{\pm \text{0.91}}$ & 87.07$_{\pm \text{0.53}}$ \\
& PASCL & {91.43}$_{\pm \text{0.17}}$ & {96.28}$_{\pm \text{0.14}}$ & {33.40}$_{\pm \text{0.88}}$ & \textbf{87.87}$_{\pm \text{0.71}}$ \\
& \ours & \textbf{93.68}$_{\pm \text{0.27}}$ & \textbf{97.42}$_{\pm \text{0.14}}$ & \textbf{26.03}$_{\pm \text{0.92}}$ & {87.64}$_{\pm \text{0.68}}$ \\
\midrule
\multirow{3}{*}{Average} 
& OE & 89.77$_{\pm \text{0.27}}$ & 87.25$_{\pm \text{0.61}}$ & 34.65$_{\pm \text{0.46}}$ & 87.36$_{\pm \text{0.51}}$ \\
& PASCL &  {90.99}$_{\pm  \text{0.19}}$ & {89.24}$_{\pm \text{0.34}}$ & {33.36}$_{\pm \text{0.79}}$ & {88.13}$_{\pm \text{0.56}}$\\
& \ours &  \textbf{92.87}$_{\pm  \text{0.33}}$ & \textbf{92.40}$_{\pm \text{0.47}}$ & \textbf{28.83}$_{\pm \text{1.23}}$ & \textbf{88.86}$_{\pm \text{0.59}}$\\
\bottomrule
\end{tabularx}
\caption{OOD detection results and in-distribution classification results in terms of ACC95.}
\end{subtable}

\begin{subtable}{\linewidth}
 \centering
\begin{tabular}{ ccccc }
\toprule
\multirow{2}{*}{\textbf{Method}}  & \multicolumn{4}{c}{\textbf{ACC@FPRn ($\uparrow$)}}   \\
& 0 & 0.001 & 0.01 & 0.1 \\
\midrule
OE & 73.54$_{\pm \text{0.77}}$ & 73.90$_{\pm \text{0.77}}$ & 74.46$_{\pm \text{0.81}}$ & 78.88$_{\pm \text{0.66}}$\\
PASCL & 77.08$_{\pm \text{1.01}}$ & 77.13$_{\pm \text{1.02}}$ & 77.64$_{\pm \text{0.99}}$ & 81.96$_{\pm \text{0.85}}$ \\
\ours & \textbf{81.31}$_{\pm \text{0.26}}$& \textbf{81.36}$_{\pm \text{0.25}}$ & \textbf{81.81}$_{\pm \text{0.26}}$ & \textbf{84.40}$_{\pm \text{0.28}}$ \\
\bottomrule
\end{tabular}
\caption{In-distribution classification results in terms of ACC@FPR$n$.}
\end{subtable}

\begin{subtable}{\linewidth}
 \centering
\begin{tabular}{ ccccc }
\toprule
\textbf{Method} & \textbf{AUROC ($\uparrow$)} & \textbf{AUPR ($\uparrow$)} & \textbf{FPR95 ($\downarrow$)} & \textbf{ACC ($\uparrow$)} \\
\midrule
ST (MSP) & 72.28 & 70.27 & 66.07 & 72.34\\
 OECC & 87.28 & 86.29 & 45.24 & 60.16 \\
 EnergyOE & 89.31 & {88.92} & 40.88 & {74.68} \\
 OE & 89.77$_{\pm \text{0.27}}$ & 87.25$_{\pm \text{0.61}}$ & 34.65$_{\pm \text{0.46}}$ & 73.84$_{\pm \text{0.77}}$\\
 PASCL &  \underline{90.99}$_{\pm \text{0.19}}$& \underline{89.24}$_{\pm \text{0.34}}$& \underline{33.36}$_{\pm \text{0.79}}$& \underline{77.08}$_{\pm \text{1.01}}$\\
 \ours &  \textbf{92.87}$_{\pm \text{0.33}}$& \textbf{92.40}$_{\pm \text{0.47}}$& \textbf{28.83}$_{\pm \text{1.23}}$& \textbf{81.31}$_{\pm \text{0.26}}$\\
\bottomrule
\end{tabular}
\caption{Comparison with other methods.}
\vspace{-0.8em}

\end{subtable}

\caption{Results on CIFAR10-LT using ResNet18. The best results are shown in bold. Mean and standard deviation over six random runs are reported.
``Average'' means the results averaged across six different $\douttest$ sets.
}\label{tab:cifar10-lt-0.01-ResNet18}
\vspace{-2em}
\end{table}

\section{Experiments}
\subsection{Experiment Settings}
\label{sec:settings}
We verify our approach on commonly used datasets in comparison with the existing state-of-the-art. CIFAR10-LT, CIFAR100-LT \cite{cao2019learning}, and ImageNet-LT \cite{liu2019large} are used as in-distribution training sets ($\din$). The standard CIFAR10, CIFAR100, and ImageNet test sets are used as in-distribution test sets ($\dintest$). Following \cite{wang2022pascl}, we set the default imbalance ratio to 100 for CIFAR10-LT and CIFAR100-LT during training.

\paragraph{OOD datasets for CIFAR-LT}
We employ $300$ thousand samples from TinyImages80M \cite{torralba200880} as the OOD training images for CIFAT10-LT and CIFAR100-LT following \cite{hendrycks2018deep,wang2022pascl}. Of those, 80 Million Tiny Images is a large-scale, diverse dataset of ${32}\times{32}$ natural images. The 300 thousand samples are selected from the  80 Million Tiny Images by \cite{hendrycks2018deep}, not intersected with the CIFAR datasets. For OOD test data, we use Textures \cite{cimpoi2014describing}, SVHN \cite{netzer2011reading}, Tiny ImageNet \cite{letiny}, LSUN \cite{yu2015lsun}, and Places365 \cite{zhou2017places} as $\douttest$. We use CIFAR-100 as a $\douttest$ for CIFAR10-LT and vice-versa. 
 \paragraph{OOD datasets for ImageNet-LT}
We use a specifically designed $\dout$ called ImageNet-Extra following \cite{wang2022pascl}. ImageNet-Extra has $517,711$ images belonging to $500$ classes randomly sampled from ImageNet-22k \cite{deng2009imagenet}, but not overlapping with the $1,000$ in-distribution classes in ImageNet-LT.
For $\douttest$, we use ImageNet-1k-OOD constructed by \cite{wang2022pascl}, which has $50,000$ OOD test images from $1,000$ classes randomly selected from ImageNet-22k (with $50$ images in each class). Considering the fairness of OOD detection, it has the same size as the in-distribution test set. 
The $1,000$ classes in ImageNet-1k-OOD are not intersecting either the $1,000$ in-distribution classes in ImageNet-LT or the $500$ OOD training classes in ImageNet-Extra.
To ensure the rigor of the experiment, ImageNet-LT $\mathcal{D}_{{\text{train}}}^{\text{in}}$, ImageNet-Extra $\mathcal{D}_{{\text{train}}}^{\text{out}}$, ImageNet-1k-OOD $\mathcal{D}_{{\text{test}}}^{\text{out}}$, and ImageNet 
$\mathcal{D}_{{\text{test}}}^{\text{in}}$ are orthogonal.

\paragraph{Evaluation measures}
Following \cite{hendrycks2018deep,mohseni2020self,yang2021semantically,wang2022pascl}, we use the below evaluation measures:
\begin{itemize}[leftmargin=*]
\item \textbf{AUROC} ($\uparrow$): The area under the receiver operating characteristic curve, which measures if the detector can rank OOD samples higher in-distribution samples.
\item \textbf{AUPR} ($\uparrow$): The area under the precision-recall curve, which means the average precision over all recall values.
\item \textbf{FPR@TPR$n$} ($\downarrow$): The false positive rate (FPR) when $n$ (in percentage) OOD samples have been successfully detected (i.e., when the true positive rate (TPR) is $n$).
\cite{hendrycks17baseline} have primarily used the measure FPR95, also known as FPR@TPR$95\%$. 
\item \textbf{ACC@TPR$n$} ($\uparrow$): The classification accuracy on the remaining in-distribution data when $n$ (in percentage) OOD samples have been successfully detected. The term ACC@TPR$95\%$ is shortened to ACC95. 
\item \textbf{ACC@FPR$n$} ($\uparrow$): The classification accuracy on the remaining in-distribution data when $n$ (in percentage) in-distribution samples are mistakenly detected as OOD. 
The accuracy on the overall in-distribution test set is known as ACC@FPR$0$, or simply ACC.
\end{itemize}

\paragraph{Model Configuration}
The current best long-tail OOD detection method is PASCL, and before that it is OE, so we mainly compare the experimental results with these two baseline methods.
For experiments on CIFAR10 and CIFAR100, we use the ResNet18 \cite{he2016deep} following \cite{yang2021semantically}. 
For experiments on CIFAR10-LT and CIFAR100-LT, we train the model for $180$ epochs using Adam \cite{kingma2014adam} optimizer with initial learning rate $1 \times 10^{-3}$ and batch size $128$. We decay the learning rate to $0$ using a cosine annealing learning rate scheduler \cite{loshchilov2016sgdr}. 
For fine-tuning, we fine-tune the classifier and BN layers for $10$ epochs using Adam optimizer with an initial learning rate $5\times 10^{-4}$. Other hyper-parameters are the same as in classifier layer fine-tuning. 
For experiments on ImageNet-LT, we follow the settings in \cite{wang2020long} and use ResNet50 \cite{he2016deep}. We train the main branch for $60$ epochs using SGD optimizer with an initial learning rate of $0.1$ and batch size of $64$. We fine-tune the classifier for the $1$ epoch using SGD optimizer with an initial learning rate of $0.01$.
In all experiments, we set $\lambda = 0.05$, and the weights for generated tail class samples are set to $0.05$ for EAT. For the number of abstention classes, we set $k=3$ on CIFAR10-LT, $k=30$ on CIFAR100-LT and ImageNet-LT.
For other hyper-parameters in the baseline methods, we use the suggested values in their original papers.

\begin{table}[!h]
\centering
\footnotesize

\setlength{\tabcolsep}{0.5mm}
\begin{subtable}{\linewidth}

\begin{tabular}{ cccccc }
\toprule
$\douttest$ & \textbf{Method} & \textbf{AUROC} & \textbf{AUPR} & \textbf{FPR95} & \textbf{ACC95} \\
\midrule
\multirow{2}{*}{Texture} 
& OE & 76.71$_{\pm \text{1.20}}$ & 58.79$_{\pm \text{1.39}}$ &	68.28$_{\pm \text{1.53}}$ & 	71.43$_{\pm \text{1.58}}$\\
& PASCL & 76.01$_{\pm \text{0.66}}$& 58.12$_{\pm \text{1.06}}$ & \textbf{67.43}$_{\pm \text{1.93}}$& 73.11$_{\pm \text{1.55}}$\\
& \ours & \textbf{80.27}$_{\pm \text{0.76}}$& \textbf{71.76}$_{\pm \text{1.56}}$& {67.53}$_{\pm \text{0.64}}$& \textbf{73.76}$_{\pm \text{0.75}}$\\
\midrule
\multirow{2}{*}{SVHN} 
& OE & 77.61$_{\pm \text{3.26}}$ & 86.82$_{\pm \text{2.50}}$& 58.04$_{\pm \text{4.82}}$&  64.27$_{\pm \text{3.26}}$\\
& PASCL & 80.19$_{\pm \text{2.19}}$& 88.49$_{\pm \text{1.59}}$ & 53.45$_{\pm \text{3.60}}$& \textbf{64.50}$_{\pm \text{1.87}}$\\
& \ours & \textbf{83.11}$_{\pm \text{2.83}}$& \textbf{89.71}$_{\pm \text{2.08}}$& \textbf{47.78}$_{\pm \text{4.87}}$& {61.67}$_{\pm \text{2.65}}$ \\
\midrule
\multirow{2}{*}{CIFAR10} 
& OE & 62.23$_{\pm \text{0.30}}$ & \textbf{57.57}$_{\pm \text{0.34}}$ & 80.64$_{\pm \text{0.98}}$ & \textbf{82.67}$_{\pm \text{0.99}}$ \\
& PASCL & \textbf{62.33}$_{\pm \text{0.38}}$ & 57.14$_{\pm \text{0.20}}$ & {79.55}$_{\pm \text{0.84}}$ & {82.30}$_{\pm \text{1.07}}$\\
& \ours & {61.62}$_{\pm \text{0.47}}$ & 55.30$_{\pm \text{0.54}}$ & \textbf{77.97}$_{\pm \text{0.77}}$ & {82.61}$_{\pm \text{0.61}}$\\
\midrule
\multirow{2}{*}{\makecell{Tiny\\ImageNet}} 
& OE & 68.04$_{\pm \text{0.37}}$ & 51.66$_{\pm \text{0.51}}$ & 76.66$_{\pm \text{0.47}}$ & 76.22$_{\pm \text{0.61}}$ \\
& PASCL & {68.20}$_{\pm \text{0.37}}$ & 51.53$_{\pm \text{0.42}}$ & {76.11}$_{\pm \text{0.80}}$ & \textbf{77.56}$_{\pm \text{1.15}}$\\
& \ours & \textbf{68.34}$_{\pm \text{0.28}}$ & \textbf{52.79}$_{\pm \text{0.25}}$ & \textbf{74.89}$_{\pm \text{0.49}}$ & {77.07}$_{\pm \text{0.39}}$\\
\midrule
\multirow{2}{*}{LSUN} 
& OE & 77.10$_{\pm \text{0.64}}$ & {61.42}$_{\pm \text{0.99}}$ & 63.98$_{\pm \text{1.38}}$ &	65.64$_{\pm \text{1.03}}$ \\
& PASCL & {77.19}$_{\pm \text{0.44}}$ & {61.27}$_{\pm \text{0.72}}$ & {63.31}$_{\pm \text{0.87}}$ & \textbf{68.05}$_{\pm \text{1.24}}$ \\
& \ours & \textbf{81.09}$_{\pm \text{0.32}}$ & \textbf{67.46}$_{\pm \text{0.64}}$ & \textbf{55.02}$_{\pm \text{1.20}}$ & {62.07}$_{\pm \text{0.78}}$ \\
\midrule
\multirow{2}{*}{Places365} 
& OE & 75.80$_{\pm \text{0.45}}$ &	86.68$_{\pm \text{0.38}}$ &	65.72$_{\pm \text{0.92}}$ &	67.04$_{\pm \text{0.49}}$ \\
& PASCL &  {76.02}$_{\pm \text{0.21}}$ & 86.52$_{\pm \text{0.29}}$ & {64.81}$_{\pm \text{0.27}}$ & \textbf{69.04}$_{\pm \text{0.90}}$ \\
& \ours &  \textbf{78.28}$_{\pm \text{0.31}}$ & \textbf{88.20}$_{\pm \text{0.20}}$ & \textbf{60.85}$_{\pm \text{0.69}}$ & {66.15}$_{\pm \text{0.68}}$ \\
\midrule
\multirow{2}{*}{Average} 
& OE & 72.91$_{\pm \text{0.68}}$ & 67.16$_{\pm \text{0.57}}$ & 68.89$_{\pm \text{1.07}}$ & 71.21$_{\pm \text{0.84}}$ \\
& PASCL & {73.32}$_{\pm \text{0.32}}$ & {67.18}$_{\pm \text{0.10}}$ & {67.44}$_{\pm \text{0.58}}$ & \textbf{72.43}$_{\pm \text{0.66}}$ \\
& \ours & \textbf{75.45}$_{\pm \text{0.83}}$ & \textbf{70.87}$_{\pm \text{0.88}}$ & \textbf{64.01}$_{\pm \text{1.44}}$ & {70.55}$_{\pm \text{0.98}}$ \\
\bottomrule
\end{tabular}
\caption{In-distribution classification results in terms of ACC@FPR$n$.}
\end{subtable}
\\
\begin{subtable}{\linewidth}
\centering
\begin{tabular}{ ccccc }
\toprule
\multirow{2}{*}{\textbf{Method}}  & \multicolumn{4}{c}{\textbf{ACC@FPRn ($\uparrow$)}}   \\
& 0 & 0.001 & 0.01 & 0.1 \\
\midrule
OE & 39.04$_{\pm \text{0.37}}$  & 39.07$_{\pm \text{0.38}}$ & 39.38$_{\pm \text{0.38}}$ & 42.40$_{\pm \text{0.44}}$ \\
PASCL & {43.10}$_{\pm \text{0.47}}$ & {43.12}$_{\pm \text{0.47}}$ & {43.39}$_{\pm \text{0.48}}$ & {46.14}$_{\pm \text{0.38}}$ \\
\ours & \textbf{46.23}$_{\pm \text{0.25}}$ & \textbf{46.24}$_{\pm \text{0.25}}$ & \textbf{46.38}$_{\pm \text{0.23}}$ & \textbf{48.39}$_{\pm \text{0.32}}$ \\
\bottomrule
\end{tabular}
\caption{OOD detection results and in-distribution classification results in terms of ACC95.}
\end{subtable}
\\
\begin{subtable}{\linewidth}
\centering
\begin{tabular}{ ccccc }
\toprule
 \textbf{Method} & \textbf{AUROC ($\uparrow$)} & \textbf{AUPR ($\uparrow$)} & \textbf{FPR95 ($\downarrow$)} & \textbf{ACC ($\uparrow$)} \\
\midrule
ST (MSP) & 61.00 & 57.54 & 82.01 & {40.97}  \\
OECC &  70.38 & 66.87 & 73.15 & 32.93 \\
EnergyOE & 71.10 & {67.23} & 71.78 & 39.05  \\
OE & {72.91}$_{\pm \text{0.68}}$ & 67.16$_{\pm \text{0.57}}$ & {68.89}$_{\pm \text{1.07}}$ & 39.04$_{\pm \text{0.37}}$  \\
PASCL & \underline{73.32}$_{\pm \text{0.32}}$ & \underline{67.18}$_{\pm \text{0.10}}$ & \underline{67.44}$_{\pm \text{0.58}}$ & \underline{43.10}$_{\pm \text{0.47}}$ \\
\ours & \textbf{75.45}$_{\pm \text{0.83}}$ & \textbf{70.87}$_{\pm \text{0.88}}$ & \textbf{64.01}$_{\pm \text{1.44}}$ & \textbf{46.23}$_{\pm \text{0.25}}$ \\
\bottomrule
\end{tabular}
\caption{Comparison with other methods.}
\end{subtable}
\vspace{-2em}
\caption{Results on CIFAR100-LT using ResNet18. The best results are shown in bold. Mean and standard deviation over six random runs are reported.
``Average'' means the results averaged across six different $\douttest$ sets.}
\label{tab:cifar100-lt-0.01-ResNet18}
\vspace{-2em}
\end{table}

\subsection{Main Results}
\label{sec:results}
\Cref{tab:cifar10-lt-0.01-ResNet18}, \Cref{tab:cifar100-lt-0.01-ResNet18}, and \Cref{tab:imagenet-lt-ResNet50} report the results for CIFAR10-LT, CIFAR100-LT, and ImageNet-LT datasets, respectively. For fair comparisons, results of existing methods are directly borrowed from \cite{wang2022pascl}.
There are three sub-tables in \Cref{tab:cifar10-lt-0.01-ResNet18} and \Cref{tab:cifar100-lt-0.01-ResNet18}: since performance measures may differ across different $\douttest$ datasets, we report AUROC, AUPR, FPR95, and ACC95 on each $\douttest$ as well as the average values across six $\douttest$ datasets in sub-table (a).
In sub-table (b), we compare ACC@FPR$n$ with various $n$ values that are independent of $\douttest$. 
Finally, we put together four main performance measures in terms of both outlier detection and inlier classification in sub-table (c). 

From the results, we can see that EAT significantly outperforms OE, PASCL, and other baselines.
For instance, on CIFAR10-LT, EAT achieves $1.88\%$ AUROC, $3.16\%$ AUPR, $4.53\%$  FPR95, $0.73\%$ ACC95, and $4.23\%$ in-distribution accuracy improvement than PASCL on average. Likewise, on CIFAR100-LT, our approach achieves $2.13\%$ higher AUROC, $3.69\%$ higher AUPR, $3.43\%$ lower FPR95, and $3.13\%$ higher classification accuracy than PASCL on average.

\begin{table}[!h]
\centering
\begin{tabular}{ ccc }
\toprule
\multirow{2}{*}{\textbf{Method}} & \multicolumn{2}{c}{\textbf{ACC ($\uparrow$)}} \\
&  Head classes & Tail classes \\
\midrule
OE & 54.29 & 20.90 \\
PASCL & 54.73 (+0.44) & 36.26 (+15.36) \\
\ours & 59.46 (+5.17) & 34.12 (+13.22) \\
\bottomrule
\end{tabular}
\caption{Results on ImageNet-LT. }
\label{tab:classwise}
\vspace{-2em}
\end{table}
%
On ImageNet-LT, our approach achieves the best results in seven cases, while the previous state-of-the-art method PASCL performs the best in only one case. Compared with OE, our approach achieves $3.51\%$ higher AUROC, $0.96\%$ higher AUPR, and $9.19\%$ higher in-distribution accuracy.

\begin{table*}[htbp]
\centering
\resizebox{\linewidth}{!}{
\begin{tabular}{ c|cc|cccc|cccc|cccc }
\toprule
\multirow{2}{*}{\textbf{Method}} & \multirow{2}{*}{\textbf{AUROC ($\uparrow$)}} & \multirow{2}{*}{\textbf{AUPR ($\uparrow$)}} & \multicolumn{4}{c|}{\textbf{FPR@TPR$n$ ($\downarrow$)}} & \multicolumn{4}{c|}{\textbf{ACC@TPR$n$ ($\uparrow$)}} & \multicolumn{4}{c}{\textbf{ACC@FPR$n$ ($\uparrow$)}}  \\
& & & 0.98 & 0.95 & 0.90 & 0.80 & 0.98 & 0.95 & 0.90 & 0.80 & 0 & 0.001 & 0.01 & 0.1 \\
\midrule
ST (MSP) & 53.81	& 51.63	&95.38&	90.15&	83.52&	72.97	&\textbf{96.67}&	\textbf{92.61}&	\textbf{87.43}&	\textbf{77.52}&	{39.65} &	{39.68} &	{40.00} &	{43.18} \\
OECC & 63.07 & 63.05 & \textbf{93.15}  & \textbf{86.90} & 78.79 & 65.23 & 94.25 & 88.23 & 80.12 & 68.36 & 38.25 & 38.28 & 38.56 & 41.47 \\
EnergyOE & 64.76&	64.77&	\underline{94.15}&	87.72 &	{78.36} & {63.71} &	80.18&	74.38&	67.65&	59.68&	38.50&	38.52&	38.72&	40.99 \\
OE & {66.33}&	{68.29}&	95.11&	88.22&	78.68&	65.28&	95.46&	88.22&	78.68&	65.28&	37.60&	37.62&	37.79&	40.00 \\
PASCL & \underline{68.00}&	\textbf{70.15}&	94.38 &	\underline{87.53}&	\underline{78.12}&	\underline{62.48} & \underline{95.69} & \underline{89.55} &	\underline{80.88} &	\underline{69.60} &	\underline{45.49} & \underline{45.51} & \underline{45.62} & \underline{47.49} \\
\ours & \textbf{69.84}&	\underline{69.25} & {94.34}&	{87.63}&	\textbf{77.30} & \textbf{57.81} & {83.22} &	{77.80} &{70.84} &	{61.49}&	\textbf{46.79}&	\textbf{46.79}&	\textbf{46.83} & \textbf{48.30} \\
\bottomrule
\end{tabular}
}
\caption{Results on ImageNet-LT. The best and second-best are bolded and underlined, respectively.}
\label{tab:imagenet-lt-ResNet50}
\vspace{-0.5em}
\end{table*}

\begin{table*}[!h]
\centering

\resizebox{1\linewidth}{!}{
\begin{tabular}{c|cccc|cccccccc }
\toprule
\multirow{2}{*}{$\din$} & \multirow{2}{*}{Virtual label} & \multirow{2}{*}{Fine-tuning} 
 & \multirow{2}{*}{CutMix} & \multirow{2}{*}{MoE} & \multirow{2}{*}{\textbf{AUROC ($\uparrow$)}} & \multirow{2}{*}{\textbf{AUPR ($\uparrow$)}} & \multirow{2}{*}{\textbf{FPR95 ($\downarrow$)}} & \multirow{2}{*}{\textbf{ACC95 ($\uparrow$)}} & \multicolumn{4}{c}{\textbf{ACC@FPR$n$ ($\uparrow$)}}  \\
& & & & & & & & & 0 & 0.001 & 0.01 & 0.1\\
\midrule
 \parbox[c]{2mm}{\multirow{7}{*}{\rotatebox[origin=c]{90}{CIFAR10-LT}}}
& \xmark & & & $\ast \ast \ast$ & {84.91}  & 91.75  & {47.46}  & \textbf{92.91}  & 77.48  & 77.52  & 77.92  & 81.65  \\
& & \xmark & & $\ast \ast \ast$ & 96.10  & 98.02  & 16.87  & {83.53}  & 78.49  & 78.54  & 79.00  & 81.85   \\
&  &  & \xmark  &$\ast \ast \ast$  & {96.62}  & 98.22  & {14.92}  & 84.79  & {79.63}  & {79.69}  & {80.19}   & {83.47}   \\
& &  &  & $\ast$  & 97.08  & {98.70} & 13.86  & 84.09 & 80.38  & 80.41  & 80.72  & {83.10} \\
&  &  &  & $\ast \ast$  & 97.62  & {98.93}  & 11.43  & {83.87}  & 80.49  & 80.53  & 80.92  & {83.49} \\
 & \cellcolor[gray]{0.75} & \cellcolor[gray]{0.75} {EAT} & \cellcolor[gray]{0.75} & \cellcolor[gray]{0.75}& \cellcolor[gray]{0.75}\textbf{97.92}  & \cellcolor[gray]{0.75}\textbf{99.06}  & \cellcolor[gray]{0.75}\textbf{9.87}  & \cellcolor[gray]{0.75}84.39  & \cellcolor[gray]{0.75}\textbf{81.31}  & \cellcolor[gray]{0.75}\textbf{81.36}  &\cellcolor[gray]{0.75}\textbf{81.81}  &\cellcolor[gray]{0.75}\textbf{84.40}  \\
\midrule
 \parbox[c]{2mm}{\multirow{7}{*}{\rotatebox[origin=c]{90}{CIFAR100-LT}}}
& \xmark &  & &  $\ast \ast \ast$& {74.04}  & 84.99  & {63.58}  & \textbf{73.98} & \textbf{46.68}  & \textbf{46.70}  & \textbf{46.91}  & \textbf{49.50}  \\
&  & \xmark & &  $\ast \ast \ast$& {81.77}  & {89.67}  & 54.53  & {64.47} & 43.34  & 43.36  & 43.59  & 46.00  \\
&  &  & \xmark & $\ast \ast \ast$ & {79.52}  & 88.11  & {55.89}  & 64.30  & 43.93  & 43.94  & 44.10  & 46.24  \\
&  &  &  & $\ast$ & 80.70  & {88.27} & 52.86 & {64.30}  & 45.82 & 45.84  & 45.97  & {47.94} \\
&  &  &  & $\ast \ast$ & 82.13  & {89.01}  & 50.51  & 63.18 & {46.32}  & {46.32}  & {46.48}  & {48.57} \\
& \cellcolor[gray]{0.75} & \cellcolor[gray]{0.75} {EAT}  &  \cellcolor[gray]{0.75} &  \cellcolor[gray]{0.75} & \cellcolor[gray]{0.75}\textbf{83.11}  &\cellcolor[gray]{0.75}\textbf{89.71}  &\cellcolor[gray]{0.75}\textbf{47.78}  & \cellcolor[gray]{0.75}61.67  & \cellcolor[gray]{0.75}{46.23}  &  \cellcolor[gray]{0.75}{46.24}  & \cellcolor[gray]{0.75}{46.38}  & \cellcolor[gray]{0.75}{48.39}  \\
\bottomrule
\end{tabular}
}
\caption{The impact of key ingredients for EAT. Experiments are conducted on CIFAR10-LT and CIFAR100-LT ($\rho=100$). 
SVHN is used as $\douttest$. The number of $\ast$ denotes the ensemble size.} 
\label{tab:abla_dl_ft_cm}
\vspace{-1em}
\end{table*}
%

\paragraph{Improvements on head and tail classes}
In \Cref{tab:classwise}, we show the improvements of our method over OE on head and tail in-distribution classes. 
As we can see, our approach can substantially benefit both the head and tail classes. Compared with PASCL, it is highly biased towards the tail class and the improvement on head classes is marginal, our method achieves a good balance.

\paragraph{Why our method achieves low ACC@TPR$n$?}
We show the failure cases. \Cref{tab:cifar100-lt-0.01-ResNet18} and \Cref{tab:imagenet-lt-ResNet50} show several cases where the baseline is better than our approach concerning ACC@TPR$n$. We empirically find this is due to more in-distribution samples preserved by our approach than the baselines when a percentage of OOD samples have been successfully detected. As a result, the classification accuracy of the remaining samples may be lower even though our approach correctly classifies more in-distribution samples than baselines. We provide more statistics in the supplementary.

\subsection{Ablation Study}
\label{sec:ablation}
\paragraph{How do the key components of EAT affect the performance?}
In \Cref{tab:abla_dl_ft_cm}, we study the effects of the four critical components in our EAT approach: (1) virtual labels, (2) classifier fine-tuning, (3) CutMix, and (4) the mixture of experts (MoE), on CIFAR10-LT and CIFAR100-LT datasets.
Since the performance of most approaches fluctuates on SVHN, we choose SVHN as $\douttest$.

First, on both CIFAR10-LT and CIFAR100-LT datasets, employing the virtual label strategy for OOD data significantly improves OOD detection performance. It improves the AUROC, AUPR, and FPR95 by an average margin of 10\%. Note that, although without using virtual labels achieves higher results of ACC95, it is because more in-distribution samples are incorrectly deemed as OOD by the model. 
%

Second, fine-tuning improves the inlier classification while maintaining a competitive OOD detection performance. For instance, the ACC@FPR increases by about 3\% on average. Moreover, since we fine-tune the classifiers for only one iteration, it does not introduce much extra computational cost. 

Third, we study the role of CutMix. We find it beneficial to use context-rich images as backgrounds to improve the performance of OOD detection and inlier classification. For OOD detection, it improves AUROC and FPR95 by about 4\% and 8\% on CIFAR100-LT, respectively. For inlier classification, the ACC@FPR is improved by an average of 2\% on CIFAR100-LT. 
Notably, we observe different results on CIFAR100-LT and CIFAR10-LT concerning ACC@FPR$n$, which is related to the in-distribution accuracy. This may be related to the value of $k$ because a larger $k$ means that more abstention class heads need to be learned, resulting in a little effect on ID accuracy. 
Furthermore, \Cref{fig:ood-score} illustrates the distribution of OOD scores on the other three OOD datasets, and our model distinguishes OOD data from in-distribution data with a clear decision boundary.

\begin{figure}[t]
\resizebox{\linewidth}{!}{     
        \includegraphics[width=0.32\columnwidth]{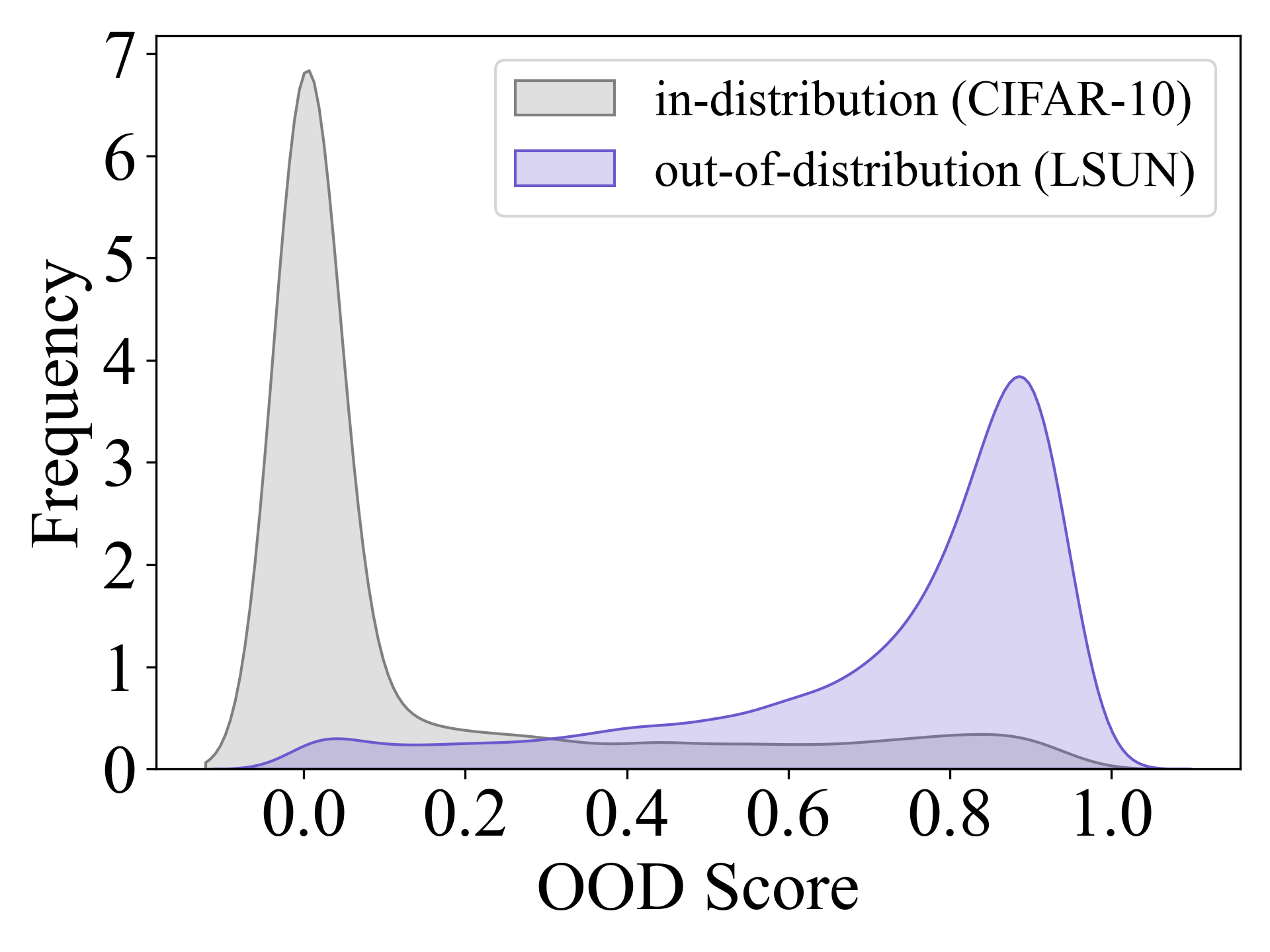}
        \includegraphics[width=0.32\columnwidth]{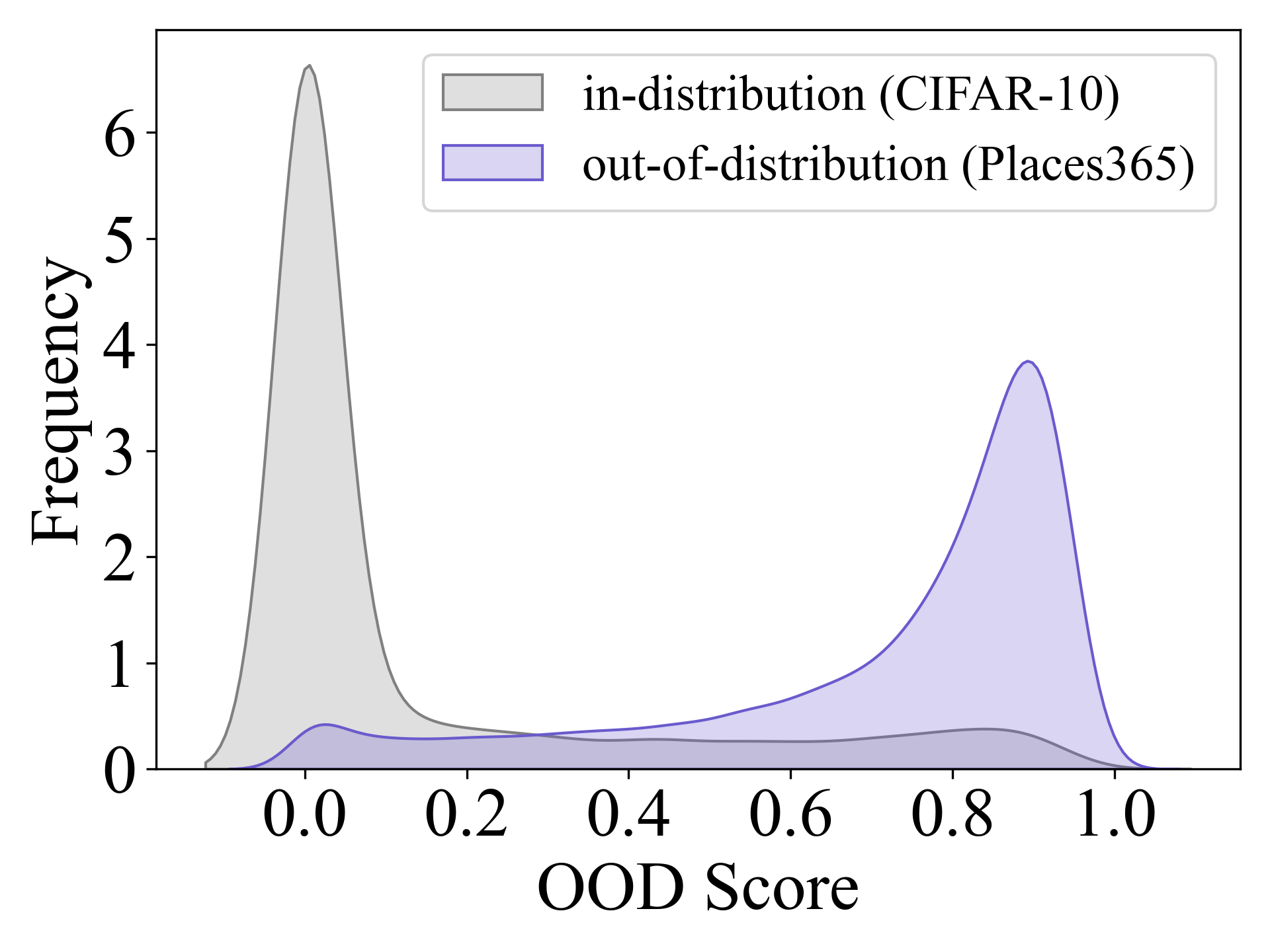}
         \includegraphics[width=0.32\columnwidth]{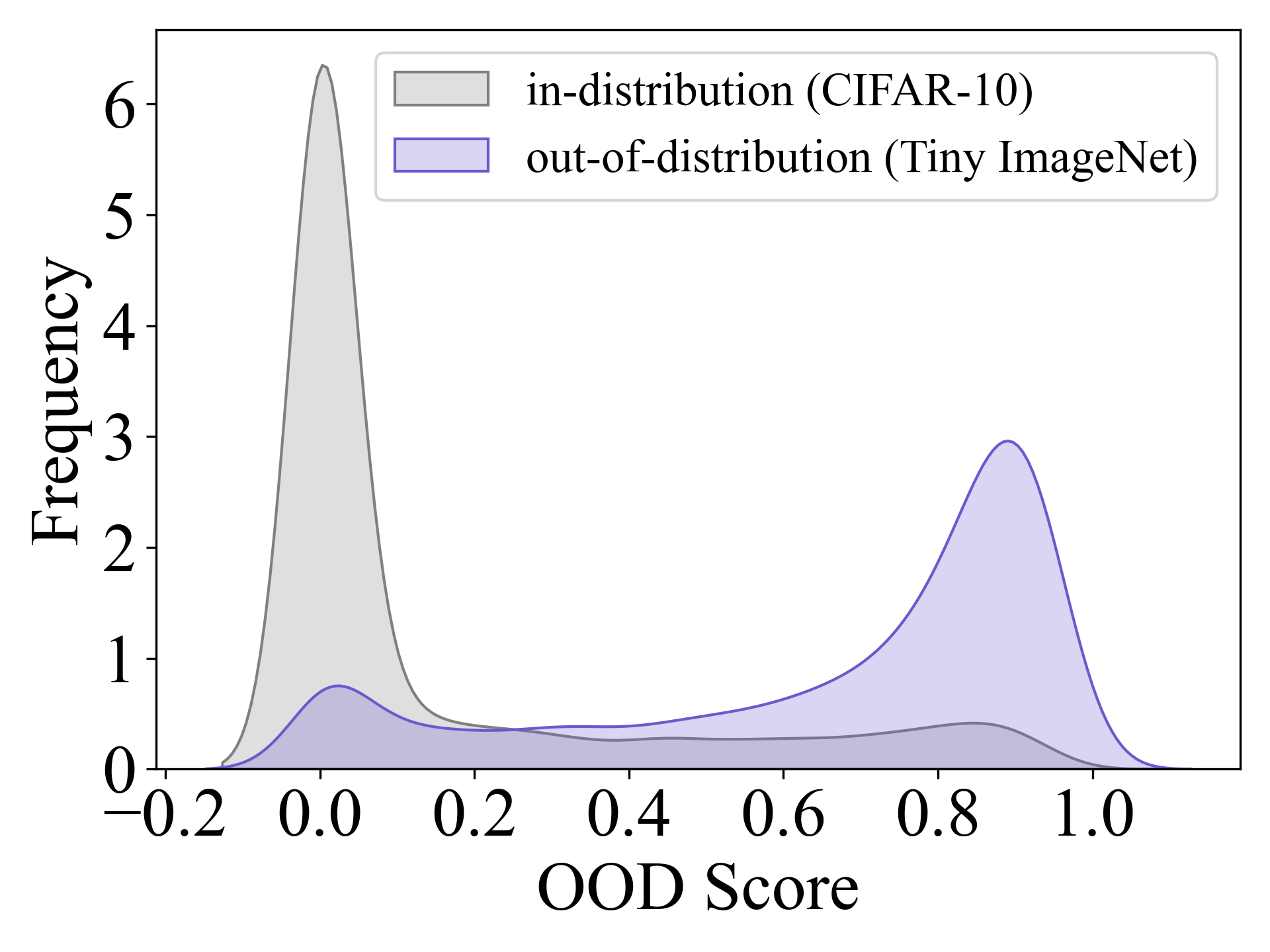}
}
\caption{Distribution of OOD scores from our model. The CIFAR10 is used as the in-distribution dataset, and the other six are OOD datasets. It shows that both in-distribution data and OOD data naturally form smooth distributions.}\label{fig:ood-score}
\vspace{-1em}
\end{figure}

\begin{table}[!h]
\centering

\resizebox{\linewidth}{!}{
\begin{tabular}{ ccccc }
\toprule
\textbf{Method} & \textbf{ACC ($\uparrow$)} & \textbf{AUROC ($\uparrow$)} & \textbf{AUPR ($\uparrow$)} & \textbf{FPR95 ($\downarrow$)} \\
\midrule
RIDE \cite{wang2020long} & \textbf{48.49}& {66.18}  &	{62.13} & 77.73 \\
\rowcolor[gray]{0.85}RIDE+Ours & 47.94 &  \textbf{72.41} & \textbf{69.37}  &  \textbf{71.99} \\
GLMC \cite{du2023global} & \textbf{54.51}& {65.01}  &	{62.01} &	79.65 \\
\rowcolor[gray]{0.85}GLMC+Ours & {52.30} & \textbf{73.07}  &  \textbf{67.14} &  \textbf{69.31}\\
\bottomrule
\end{tabular}
} 
\caption{Combining with other methods. The experiment is conducted on CIFAR-100 (in-distribution) dataset and six OOD datasets.}
\label{tab:plugin}
\end{table}
\vspace{-1em}
\paragraph{A Good Closed-Set Classifier is All You Need?} 
An interesting finding from previous work \cite{vaze2022openset} suggests that utilizing the maximum logit score rule with a highly accurate in-distribution classifier can outperform many well-designed OOD detectors. However, we sought to investigate whether this holds true in the context of long-tailed tasks. To validate this, we conducted experiments involving two sophisticated long-tail learning methods, namely RIDE \cite{wang2020long} and GLMC \cite{du2023global}. We find that their OOD detection performance lagged significantly behind that of our proposed method, providing evidence that a strong classifier alone is insufficient for effective long-tailed OOD detection. Furthermore, when we combined our method with RIDE and GLMC, as shown in Table \ref{tab:plugin}, we observe a substantial improvement in OOD detection performance with minimal sacrifice in in-distribution classification accuracy, underscoring the versatility of our approach.

\section{Conclusion}
In the real-world deployment of machine learning models, test inputs with previously unseen classes are often encountered. For safety, it may be essential to identify such inputs. Moreover, the class-balanced training in-distribution data assumption rarely holds in the wild.
This paper proposes a novel framework (EAT) to tackle the long-tailed OOD detection problem. Towards this end, EAT presents several general techniques that can easily be applied to mainstream OOD detectors and long-tail learning methods. First, the abstention OOD classes can be used as an alternative to the outlier exposure method. Second, tail-class augmentation can be employed as a universal add-on for existing methods. Third, the classifier ensembling technique can further boost the performance without introducing much additional computational cost. Finally, we evaluate the proposed method on many commonly used datasets, showing that it consistently outperforms the existing state-of-the-art. 


\bibliography{arxiv}

\appendix

\section{ Proof of Proposition 1 }
\begin{prop}
For the cross-entropy loss, Eq. (3) induces gradient noise $\boldsymbol{g} = - \frac{\nabla_{\boldsymbol{\theta}}\boldsymbol{\widetilde{z}}_j}{\boldsymbol{\widetilde{z}}_j}$ on $\nabla_{\boldsymbol{\theta}}\ell(\boldsymbol{z},y)$, s.t., $\boldsymbol{g} \in \mathbb{R}^{p}, j = \arg \max_{j\in [C+1,C+k]} \boldsymbol{\widetilde{z}}$. While each OOD sample in OE \cite{hendrycks2018deep} induces gradient noise $\boldsymbol{g}^\prime = -\frac{1}{C}\sum_{j=1}^{C}\frac{\nabla_{\boldsymbol{\theta}}\boldsymbol{\widetilde{z}}_j}{\boldsymbol{\widetilde{z}}_j}$ on $\nabla_{\boldsymbol{\theta}}\ell(\boldsymbol{z},y)$, where $\frac{\cdot}{\boldsymbol{\widetilde{z}}_j}$ denotes the element-wise division.
\end{prop}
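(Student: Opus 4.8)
The plan is to treat the total objective $\Lin + \lambda\Lout$ as the quantity being minimized by SGD and to exploit linearity of the gradient: $\nabla_{\boldsymbol{\theta}}(\Lin+\lambda\Lout)=\nabla_{\boldsymbol{\theta}}\ell(\boldsymbol{z},y)+\lambda\nabla_{\boldsymbol{\theta}}\Lout$. Following the gradient-noise viewpoint of \cite{wei2021odnl}, I would regard the outlier contribution (up to the scaling $\lambda$) as the perturbation that the outlier term injects into the inlier update $\nabla_{\boldsymbol{\theta}}\ell(\boldsymbol{z},y)$. The whole statement then reduces to evaluating $\nabla_{\boldsymbol{\theta}}\Lout$ in closed form for the two competing outlier losses and reading off the two displayed expressions.

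For the virtual-label loss of Eq.~\eqref{eq:ce-ood}, I would first freeze the target: at each iteration the virtual label $\widetilde{y}=j=\arg\max_{c\in[C+1,C+k]}\boldsymbol{\widetilde{z}}$ is generated by the current model and then held fixed (a stop-gradient), so one differentiates only through the explicit dependence of the loss on $f(\boldsymbol{\widetilde{x}})$. Writing the single-sample cross-entropy as $\Lout=-\log\boldsymbol{\widetilde{z}}_j$ and applying the chain rule gives $\nabla_{\boldsymbol{\theta}}\Lout=-\frac{\nabla_{\boldsymbol{\theta}}\boldsymbol{\widetilde{z}}_j}{\boldsymbol{\widetilde{z}}_j}=\boldsymbol{g}$, precisely the claimed noise with a single, data-dependent direction indexed by $j$.

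For OE I would use that its outlier objective is the cross-entropy between the uniform inlier distribution and the prediction, $\Lout^{\mathrm{OE}}=-\frac{1}{C}\sum_{j=1}^{C}\log\boldsymbol{\widetilde{z}}_j$. Differentiating term by term and again invoking the chain rule yields $\boldsymbol{g}^{\prime}=-\frac{1}{C}\sum_{j=1}^{C}\frac{\nabla_{\boldsymbol{\theta}}\boldsymbol{\widetilde{z}}_j}{\boldsymbol{\widetilde{z}}_j}$, the claimed average over all $C$ inlier coordinates. Comparing $\boldsymbol{g}$ with $\boldsymbol{g}^{\prime}$ then yields the qualitative contrast stressed in the remark: OE always averages the same $C$ inlier directions into a direction-stable perturbation, whereas the virtual-label noise concentrates on a single abstention coordinate $j\in[C+1,C+k]$ that changes across samples and iterations.

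The differentiation itself is routine, so the main obstacle is conceptual rather than technical: making the signal/noise decomposition precise (identifying what plays the role of noise and where the factor $\lambda$ is absorbed) and justifying that the virtual label is a constant target so that the $\arg\max$ is not differentiated through. I would settle the latter by appealing to the self-labelling protocol described earlier, in which the label produced at one iteration serves as a fixed target at the next, exactly as in pseudo-labelling for semi-supervised learning.
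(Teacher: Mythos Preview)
Your proposal is correct and follows essentially the same approach as the paper: decompose the total gradient as $\nabla_{\boldsymbol{\theta}}\ell(\boldsymbol{z},y)+\nabla_{\boldsymbol{\theta}}\Lout$, identify the outlier term as the noise (with $\lambda$ omitted), and compute it via the chain rule for both the virtual-label cross-entropy $-\log\boldsymbol{\widetilde{z}}_j$ and the OE objective $-\frac{1}{C}\sum_{j=1}^{C}\log\boldsymbol{\widetilde{z}}_j$. The paper's proof is slightly more explicit in writing out the one-hot vector $\boldsymbol{e}^{\widetilde{y}}$ before simplifying, but your direct computation and your careful remark about the stop-gradient on the $\arg\max$ are fully in line with (and arguably clearer than) the paper's own derivation.
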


\begin{proof}
    For Eq.~(1), the total gradient is as follows:
\begin{align*}
    \widetilde{\nabla}_{\boldsymbol{\theta}}\ell_{\mathrm{total}} &= \nabla_{\boldsymbol{\theta}}\ell\left(\boldsymbol{z},y\right) + \nabla_{\boldsymbol{\theta}}\ell\left(\widetilde{\boldsymbol{z}},\widetilde{y}\right) \\&=\nabla_{\boldsymbol{\theta}}\ell\left(\boldsymbol{z},y\right) +  \nabla_{\widetilde{\boldsymbol{z}}}\ell\left(\widetilde{\boldsymbol{z}},\widetilde{y}\right)\cdot\nabla_{\boldsymbol{\theta}}\widetilde{\boldsymbol{z}}
\end{align*}
We omit the trade-off parameter $\lambda$ in Eq.~(1) for simplicity.
Note that cross-entropy loss is $\ell\left(\boldsymbol{z},y\right) = -\boldsymbol{e}^{y} \log \boldsymbol{z}$, then the noise imposed on the in-distribution classification loss $\nabla_{\boldsymbol{\theta}}\ell\left(\boldsymbol{z},y\right)$ is:
\begin{align*}
    \boldsymbol{g} &= \nabla_{\widetilde{\boldsymbol{z}}}\ell\left(\widetilde{\boldsymbol{z}},\widetilde{y}\right)\cdot\nabla_{\boldsymbol{\theta}}\widetilde{\boldsymbol{z}} = -\left(\frac{e^{\widetilde{y}}}{\widetilde{\boldsymbol{z}}}\right)^{T}\cdot\nabla_{\boldsymbol{\theta}}\widetilde{\boldsymbol{z}}\\& = - \sum_{j=1}^{C+k} \left(e^{\widetilde{y}}_j \cdot \frac{\nabla_{\boldsymbol{\theta_i}}\boldsymbol{\widetilde{z}}_j}{\boldsymbol{\widetilde{z}}_j}\right), \quad \text{s.t. } \widetilde{y} = \arg \max_{j\in [C+1,C+k]} \boldsymbol{\widetilde{z}}.
\end{align*}
Since $\boldsymbol{e}^{\widetilde{y}} = (0, \cdots, 1, \cdots, 0)$ is the one-hot vector and only the $y$-th entry is 1, the expression of the noise $\boldsymbol{z}$ can be simplified as:
\begin{equation*}
    \boldsymbol{g} = -\frac{\nabla_{\boldsymbol{\theta}}\boldsymbol{\widetilde{z}}_j}{\boldsymbol{\widetilde{z}}_j}, \quad \text{s.t. } j = \arg \max_{j\in [C+1,C+k]} \boldsymbol{\widetilde{z}}.
\end{equation*}

Let $g_i$ be the $i$-th entry of $\boldsymbol{g}$, we have
\begin{equation*}
    g_i = -\frac{\nabla_{\boldsymbol{\theta}_i}\boldsymbol{\widetilde{z}}_j}{\boldsymbol{\widetilde{z}}_j}, \quad \text{s.t. } j = \arg \max_{j\in [C+1,C+k]} \boldsymbol{\widetilde{z}}.
\end{equation*}

While the regularization item of the OE method is $ \ell_{\mathrm{OE}} = - \frac{1}{C}\cdot\sum_{i=1}^{C}\log \boldsymbol{\widetilde{z}}_i$. 

Then the gradient of $\ell_{\mathrm{OE}}$ w.r.t $\theta$ is as follows:

\begin{align*}
\nabla_{\theta} \ell_{\mathrm{OE}} &= \nabla_{\boldsymbol{\widetilde{z}}}\ell_{\mathrm{OE}} \cdot \nabla_{\theta}\boldsymbol{\widetilde{z}} = \nabla_{\boldsymbol{\widetilde{z}}}\left(- \frac{1}{C}\cdot\sum_{j=1}^{C}\log \boldsymbol{\widetilde{z}}_j\right) \cdot \nabla_{\theta}\boldsymbol{\widetilde{z}} \\&= - \frac{1}{C}\cdot\sum_{j=1}^{C}\frac{\nabla_{\theta}\boldsymbol{\widetilde{z}}_j}{\boldsymbol{\widetilde{z}}_j}
\end{align*}
\end{proof}


\section{Additional Experimental Results}
\label{sec:appx-results}
\paragraph{On imbalance ratio $\rho$}

We use imbalance ratio $\rho=100$ on both CIFAR10-LT and CIFAR100-LT in routine long-tail OOD experiments. In this section, we show that our method can work well under different imbalance ratios. 
Specifically, we conduct experiments on CIFAR10-LT with $\rho=50$. The results are shown in \Cref{tab:cifar10-lt-0.05-ResNet18}. Our approach also outperforms the OE by a considerable margin when $\rho=50$ and outperforms previous state-of-the-art PASCL in 3 out of 4 cases. 

\begin{table}[!h]
\centering
\caption{Results on CIFAR10-LT ($\rho=50$) using ResNet18.}
\resizebox{1\linewidth}{!}{
\begin{tabular}{ c|c|ccc|c }
\toprule
$\douttest$ & \textbf{Method} & \textbf{AUROC ($\uparrow$)} & \textbf{AUPR ($\uparrow$)} & \textbf{FPR95 ($\downarrow$)} & \textbf{ACC ($\uparrow$)} \\
\midrule
\midrule
\multirow{3}{*}{Average} 
& OE & 93.13 & 91.06 & 24.73 & 83.34 \\
& PASCL & {93.94} & {92.79} & \textbf{22.80} & {85.44} \\
& \ours & \textbf{94.06} & \textbf{93.50} & {24.03} & \textbf{85.57} \\
\midrule
\bottomrule
\end{tabular}
} 
\label{tab:cifar10-lt-0.05-ResNet18}
\end{table}

\paragraph{On model structures}
In routine experiments, we use the standard ResNet18 as the backbone model. In this section, we show that our method can work well under different model structures, but conduct experiments using the standard ResNet34 \cite{he2016deep}. The results are shown in \Cref{tab:cifar10-lt-0.01-ResNet34}. Our method consistently outperforms OE and PASCL. In particular, it improves the in-distribution accuracy by 6\% in comparison with PASCL.
\begin{table}[!h]
\centering
\caption{Results on CIFAR10-LT ($\rho=100$) using ResNet34.}
\resizebox{1\linewidth}{!}{
\begin{tabular}{ c|c|ccc|c }
\toprule
$\douttest$ & \textbf{Method} & \textbf{AUROC ($\uparrow$)} & \textbf{AUPR ($\uparrow$)} & \textbf{FPR95 ($\downarrow$)} & \textbf{ACC ($\uparrow$)} \\
\midrule
\midrule
\multirow{3}{*}{Average} 
& OE & 89.86 & 87.28 & 33.66 & 73.39 \\
& PASCL & {91.11} & {89.28} & {33.21} & {75.34} \\
& \ours & \textbf{93.38} & \textbf{92.85} & \textbf{26.56} & \textbf{82.20} \\
\midrule
\bottomrule
\end{tabular}
} 
\label{tab:cifar10-lt-0.01-ResNet34}
\end{table}

\begin{figure*}[ht]
    \centering
         \includegraphics[width=0.5\columnwidth]{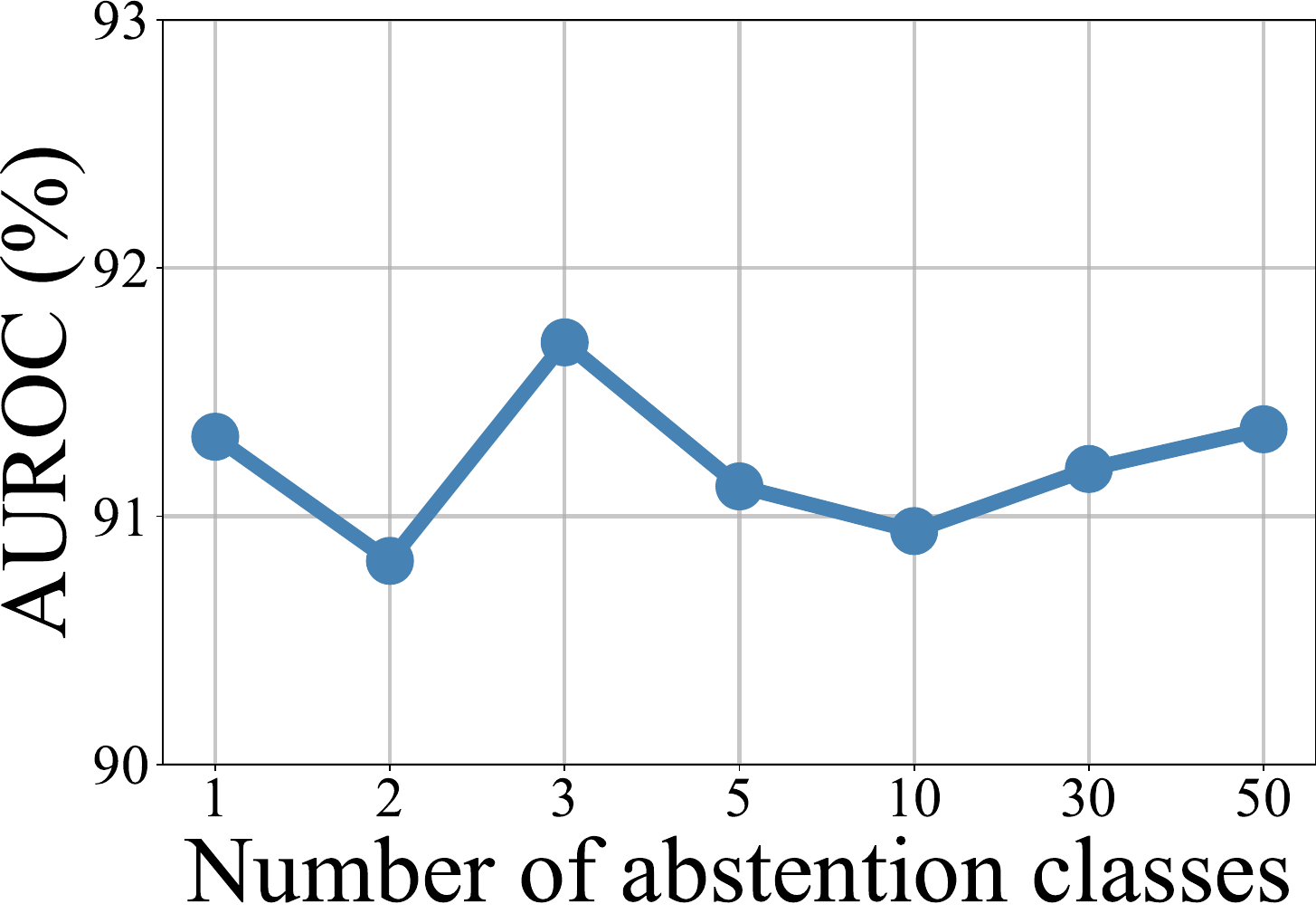} 
         \includegraphics[width=0.5\columnwidth]{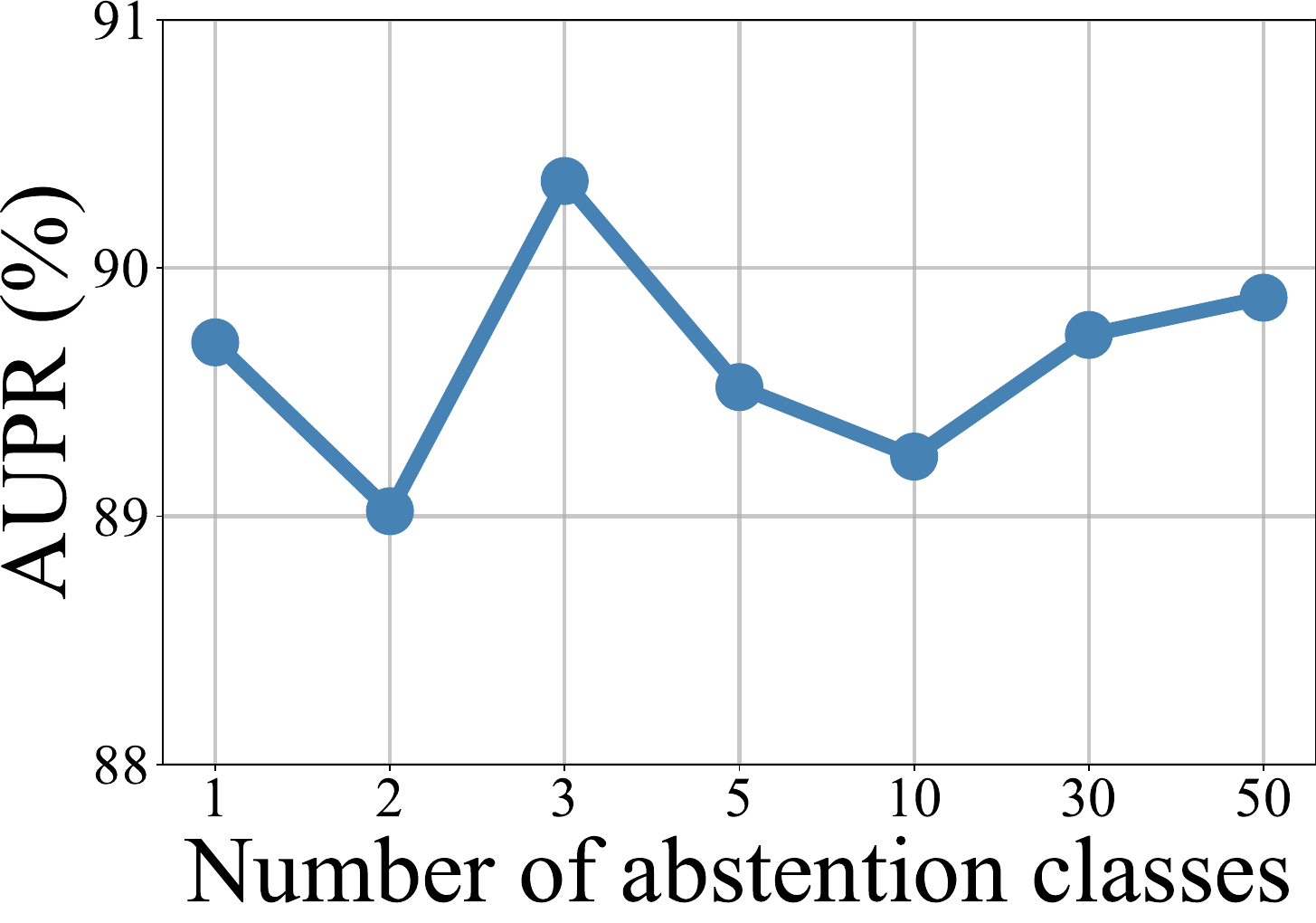}
         \vspace{1em}
         \includegraphics[width=0.5\columnwidth]{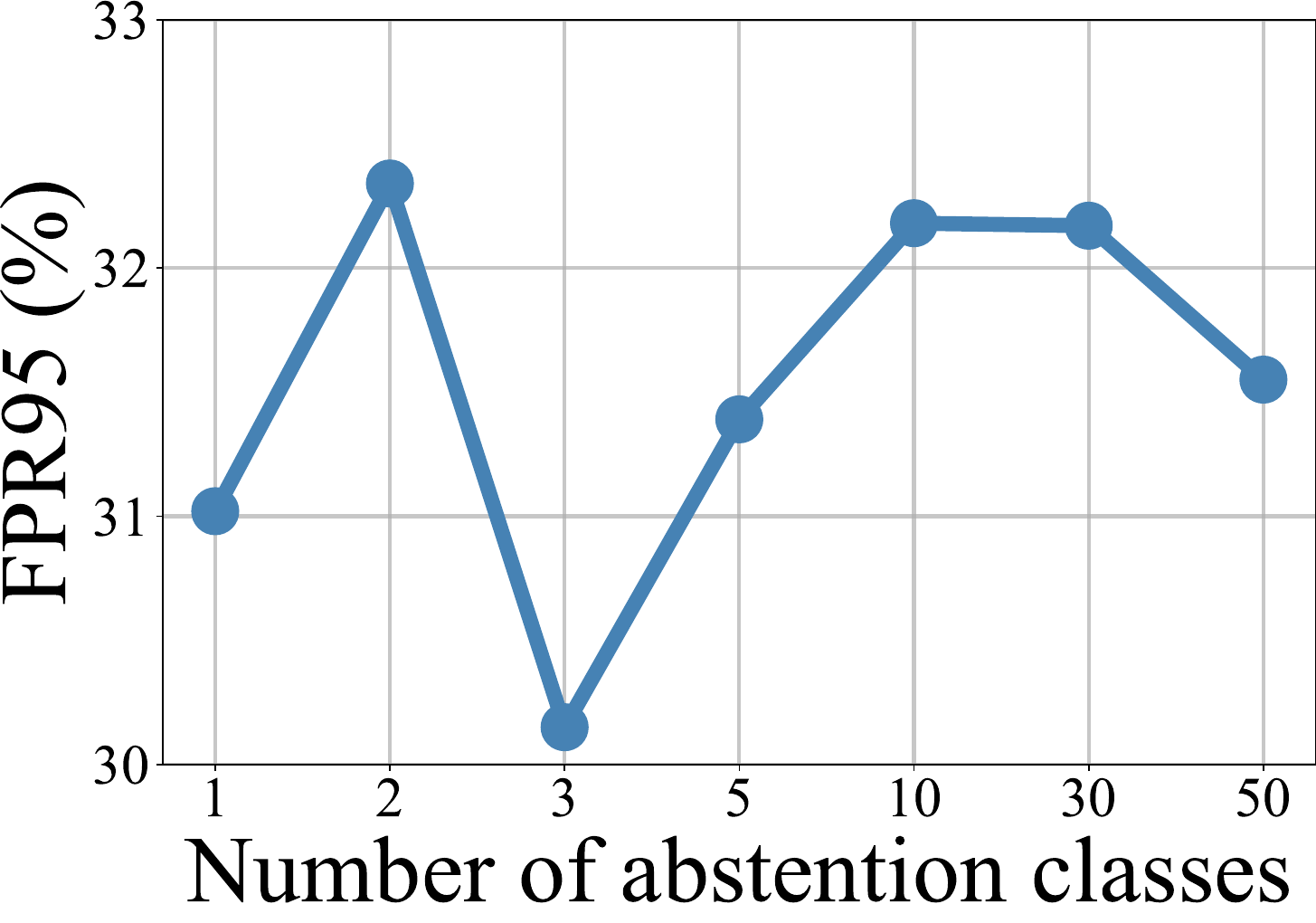} 
         \includegraphics[width=0.5\columnwidth]{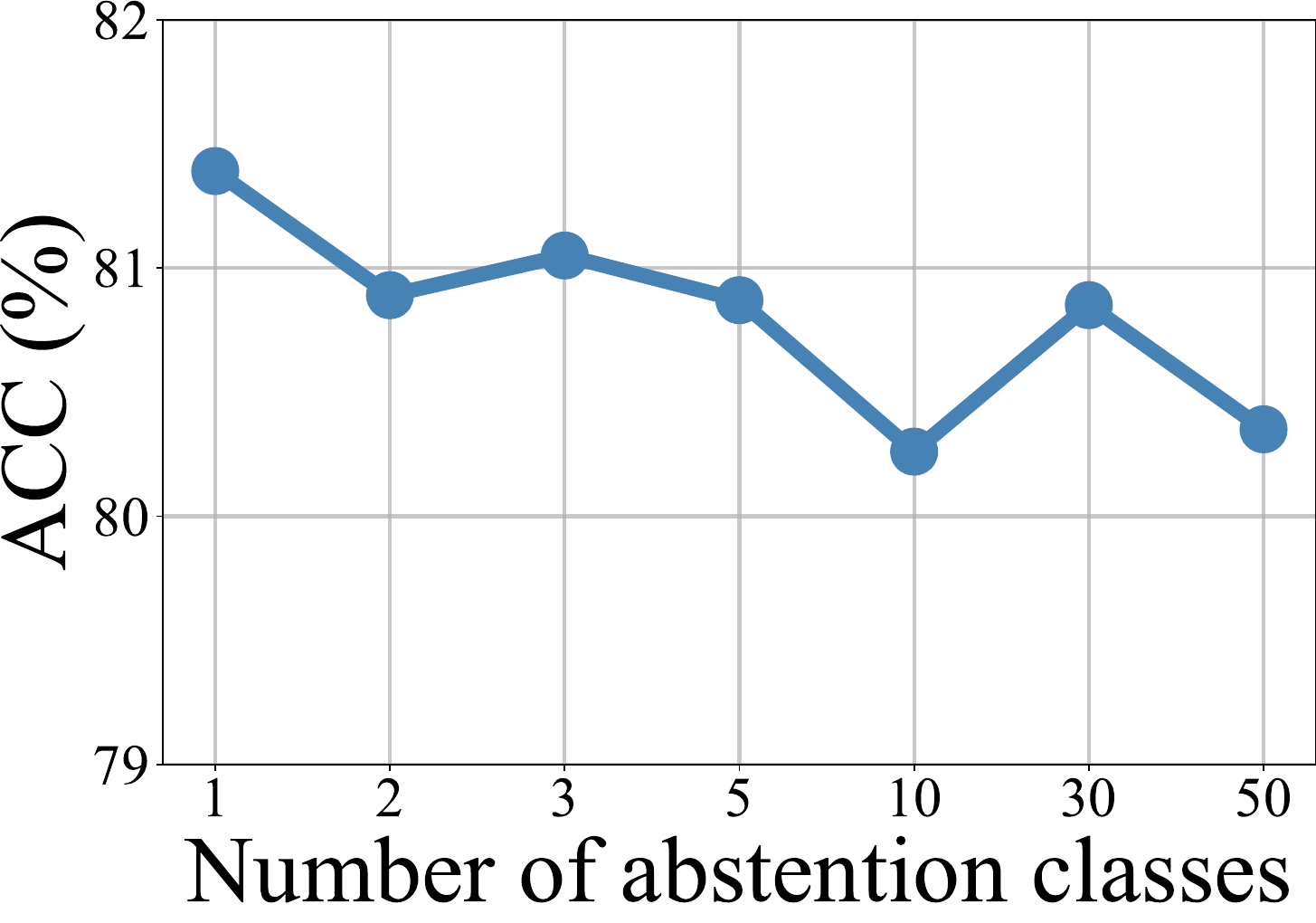}
\caption{Effect of the number of abstention classes on the model performance.}\label{ablation-ood-class}
\end{figure*}
\paragraph{How does the number of abstention classes affect the performance?}
\Cref{ablation-ood-class} shows how the performance changes with different numbers of abstention classes on CIFAR10-LT. Overall the performance is not sensitive in the range chosen. We empirically find that setting $k=3$ achieves relatively better results with respect to three out of four representative OOD detection performance measures, i.e., AUROC, AUPR, and FPR95.

\paragraph{More visualization}
\Cref{fig:appedix-ood-score} illustrates the distribution of OOD scores on the other three OOD datasets considered in this paper. Overall our model distinguishes OOD data from in-distribution data with a clear decision boundary.

\paragraph{Convergence and instability of our method.}
We present the training loss curve and test accuracy in Figure \ref{fig:rebuttal-1}, providing evidence of the stability of our method throughout training. This visualization serves to assure that overfitting is not observed in our approach.

\vspace{-0.5em}
\begin{figure}[h]
\resizebox{\linewidth}{!}{     
        \includegraphics[width=1\columnwidth]{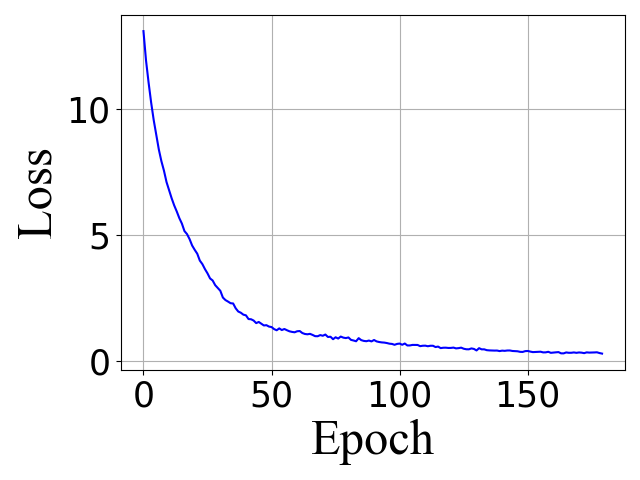}
        \includegraphics[width=1\columnwidth]{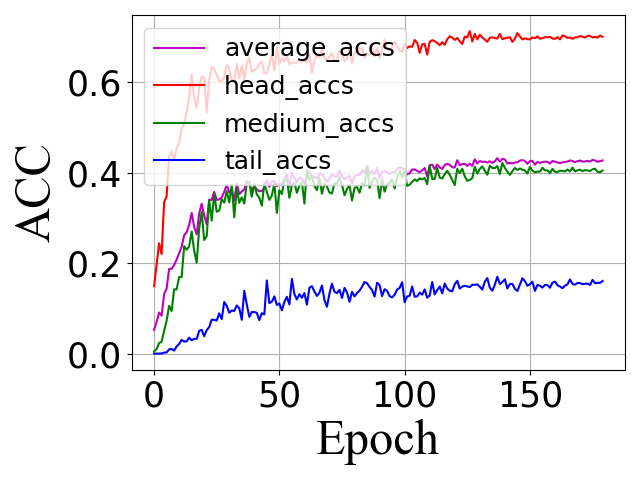}
}
\vspace{-1.5em}
\caption{Training loss curve and test accuracy.}\label{fig:rebuttal-1}
\end{figure}

\section{Contrast with SOFL}
\label{sec:appx-related}
Similar to our approach, SOFL \cite{mohseni2020self} also uses multiple abstention classes for OOD detection and fine-tunes the model in the second stage. Specifically, SOFL trains on in-distribution data using the cross-entropy loss in the first stage, then uses both in- and out-of-distribution data in the second stage to fine-tune the whole model. 

However, as demonstrated in Algorithm 1, we train the model in the first stage using in- and out-of-distribution data. In the second stage, we only fine-tune the final classification layer of the model using in-distribution data and the loss $\ell_{\text{LA}}$ specially designed for long-tailed learning.

Our approach surpasses SOFL in terms of in-distribution classification and OOD detection by a large margin. It is not just about other aspects of our model, but more intuitively, we have superior designs in the two training stages. We use more data in the first stage to make the representation learning of the model more discriminative. The logit adjustment loss adopted in the second stage further boosts the accuracy for tail classes. Moreover, only updating the final classification layer also preserves the model's judgment ability for OOD.

\begin{figure*}[t]
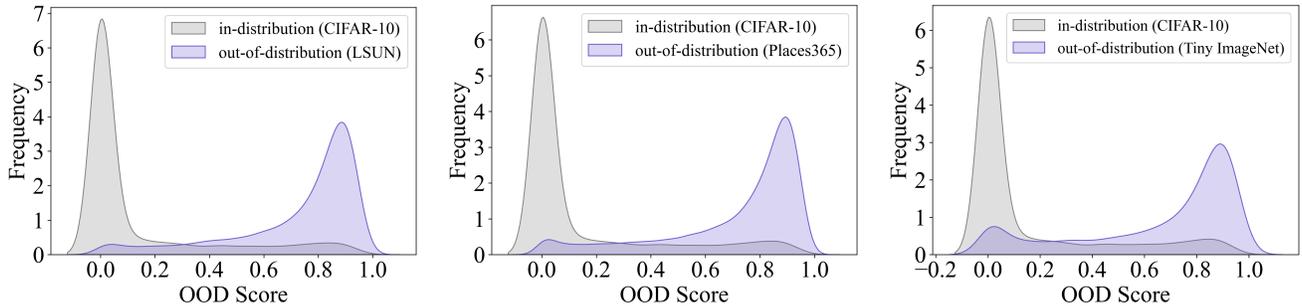

\resizebox{\linewidth}{!}{     
        \includegraphics[width=0.32\columnwidth]{LSUN.png}
        \includegraphics[width=0.32\columnwidth]{Places365.png}
         \includegraphics[width=0.32\columnwidth]{Tin.png}
}
\caption{Distribution of OOD scores from our model. The CIFAR10 is used as the in-distribution dataset, and the other six are OOD datasets. It shows that both in-distribution data and OOD data naturally form smooth distributions.}\label{fig:appedix-ood-score}
\end{figure*}

\begin{table}[!h]
\centering
\caption{We compare the ACC95 of PASCL and ours. Since the number of $\dintest$ is $10,000$, we round the product of these numbers to represent the number of correctly classified in-distribution samples in the remaining in-distribution samples when $95$\% percent of the OOD samples are selected. ACC95 and $\text{1-FPR95}$ are shown in percentages.}
\resizebox{1\linewidth}{!}{
\begin{tabular}{ c|c|ccc }
\toprule
$\douttest$ & \textbf{Method} & \textbf{ACC95 ($\uparrow$)} & \textbf{1 - FPR95 ($\uparrow$)} &  $N_{\text{correct}}$ ($\uparrow$) \\
\midrule
\midrule
\multirow{3}{*}{\makecell{Texture}} & OE & 71.43 & 31.72 & 2266  \\
& PASCL & 73.11 & 32.57 & 2381  \\
& Ours & 73.76  & 32.47  & 2395   \\
\midrule
\multirow{3}{*}{SVHN} & OE & 64.27 & 41.96 & 2697  \\
& PASCL & 64.50 & 46.55 & 3002  \\
& Ours & 61.67  & 52.22  & 3220   \\
\midrule
\multirow{3}{*}{CIFAR10} & OE & 82.67 & 19.36 & 1601  \\
& PASCL & 82.30 & 20.45 & 1683  \\
& Ours & 82.61  & 22.03  & 1820   \\
\midrule
\multirow{3}{*}{Tiny ImageNet} & OE & 76.22 & 23.34 & 1779  \\
& PASCL & 77.56 & 23.89 & 1853  \\
& Ours & 77.07  & 25.11  & 1935   \\
\midrule
\multirow{3}{*}{LSUN} & OE & 65.64 & 36.02 & 2364  \\
& PASCL & 68.05 & 36.69 & 2497  \\
& Ours & 62.07  & 44.98  & 2791   \\
\midrule
\multirow{3}{*}{Places365} & OE & 67.04 & 34.28 & 2298  \\
& PASCL & 69.04 & 35.19 & 2430  \\
& Ours & 66.15  & 39.15  & 2590   \\
\midrule
\multirow{3}{*}{Average} & OE & 71.21 & 31.11 & 2215  \\
& PASCL & 72.43 & 32.56 & 2358  \\
& Ours & 70.55  & 36.00  & 2540   \\
\midrule
\bottomrule
\end{tabular}
}
\label{tab:acc95}
\end{table}

\section{Balance of FPR95 and ACC95}\label{appedix:acc95}
Mohseni et al. \cite{mohseni2020self} show that it is very challenging to achieve high FPR95 and ACC95 concurrently in the OOD detection task. The reason is simple: when FPR95 is low, more in-distribution samples are correctly detected, including those problematic and corner-case in-distribution samples. As a result, the remaining in-distribution samples might incur more misclassification, leading to low ACC95.

In this paper, we raise a question about ACC95: as an indicator of OOD detection, whether a larger ACC95 is preferable when $n$ (in percentage) OOD samples have been successfully detected. Because when a fixed rate of OOD samples is filtered out, the number of the remaining in-distribution samples is not stable among different algorithms or experiment settings. When this number does not fluctuate much among various algorithms, a higher ACC95 indicates better performance. However, there is an alternative interpretation when the number of remaining inliers fluctuates significantly. We compare the results with the previous long-tail OOD detection method PASCL \cite{wang2022pascl}, and we find that although our approach performs worse than PASCL concerning ACC95 on most of the OOD datasets, the number of remaining in-distribution samples is far more than PASCL. We believe that with the current experiment setups, it makes more sense to compare the number of remaining inlier samples correctly classified after filtering out $n$ (in percentage) OOD samples. The detailed results are shown in \cref{tab:acc95}.

Therefore, we may combine the evaluation of these two indicators. Assuming the number of $\dintest$ is $N$, the number of correctly classified inlier samples remaining when $95$\% percent of the OOD samples are filtered out is $N_{\text{correct}}$. Then we have the following formula:
\begin{equation}
N_{\text{correct}} = N \times (1-FPR95) \times ACC95 
\label{eq:Calculate N}
\end{equation}
In this way, we may evaluate ACC95 and FPR95 in the meantime.
It is important to note that $N_{\text{correct}}$ has a practical physical significance. When $95\%$ of the OOD samples are filtered out, the remaining correctly classified in-distribution samples.
Considering the influence of $N$ on this detection value, $N$ can be omitted when comparing different experiments, and the product of ACC95 and (1-FPR95) can be directly calculated as a measurement.
In fact, our method optimizes both ACC95 and FPR95 on CIFAR10-LT. And even though ACC95 performs slightly worse on CIFAR100-LT, the number of the remaining in-distribution samples $N_{\text{correct}}$ still outperforms OE and PASCL on each $\douttest$ and overall. In addition, the calculation process of $N_{\text{correct}}$ involves both the inlier accuracy of the model and the sensitivity to OOD detection, which may make $N_{\text{correct}}$ become a new paradigm with both in-distribution classification and OOD detection.

\section{Limitations}


While our method examines tail-class augmentation via CutMix, which uses head-class and OOD images as background and tail-class images as the foreground, it results in more reasonable and effective representation learning for tail samples, greatly improving tail part in-distribution accuracy and OOD detection performance. However, since CutMix generates more data, it necessitates additional GPU RAM, albeit with a continuous rise in size. We attempted to minimize extra time overhead and GPU memory consumption by appropriate code architecture and algorithm structure, but we must agree that some extra overhead is unavoidable.

Additionally, this paper focuses on improving the performance by directly using CutMix, that is, we do not adjust the sampling area distribution or fine-tune any other component parts to make this technique more suitable for long-tailed OOD tasks. And we do not rule out the possibility of other more advanced methods achieving this function and yielding better results in the future.


In many real-world applications such as autonomous driving, medical diagnosis, and healthcare, beyond being naturally imbalanced. Moreover, the model usually encounters unknown (out-of-distribution) test data points once deployed. In this paper, we focus on standard classification and out-of-distribution detection as our measure and largely ignore other ethical issues in imbalanced data, especially in minor classes. For example, the data may impose additional constraints on the learning process and final models, e.g., being fair or private. As such, the risk of producing unfair or biased outputs reminds us to carry rigorous validations in critical, high-stakes applications.

\end{document}